\newtheorem{theorem}{Theorem}[section]
\newtheorem{lemma}[theorem]{Lemma}
\newtheorem{corollary}[theorem]{Corollary}
\newcommand{\reals}{\mathbb{R}}
\newcommand{\sphere}{\mathbb{S}}
\newcommand{\E}{\mathbb{E}}
\newcommand{\var}{\text{Var}}
\newcommand{\bx}{\mathbf{x}}
\newcommand{\bw}{\mathbf{w}}
\newcommand{\bv}{\mathbf{v}}
\newcommand{\by}{\mathbf{y}}
\newcommand{\Acal}{\mathcal{A}}
\newcommand{\Dcal}{\mathcal{D}}
\newcommand{\Ncal}{\mathcal{N}}
\newcommand{\inner}[1]{\langle#1\rangle}
\newtheorem{claim}{Claim}
\renewcommand{\eqref}[1]{Eq.~(\ref{#1})}
\newcommand{\printfnsymbol}[1]{%
  \textsuperscript{\@fnsymbol{#1}}%
}
\author{
{\small Amit Daniely}\\
{\small Hebrew University and Google}\\
{\small \texttt{amit.daniely@mail.huji.ac.il }}\\
\and
{\small Nathan Srebro}\\
{\small TTI-Chicago}\\ 
{\small \texttt{nati@ttic.edu}}\\
\and
{\small Gal Vardi}\\
{\small TTI-Chicago and Hebrew University}\\ 
{\small \texttt{galvardi@ttic.edu}}\\
\and
{\small Collaboration on the Theoretical Foundations of Deep Learning (\url{deepfoundations.ai})}
}
\title{Most Neural Networks Are Almost Learnable}
\date{}
\begin{document}

\maketitle

\begin{abstract}
    We present a PTAS for learning random constant-depth networks.
    We show that for any fixed $\epsilon>0$ and depth $i$, there is a poly-time algorithm that for any distribution on $\sqrt{d} \cdot \sphere^{d-1}$ learns random Xavier networks of depth $i$, up to an additive error of $\epsilon$. The algorithm runs in time and sample complexity of $(\bar{d})^{\mathrm{poly}(\epsilon^{-1})}$, where $\bar d$ is the size of the network. For some  cases of sigmoid and ReLU-like activations the bound can be improved to $(\bar{d})^{\mathrm{polylog}(\epsilon^{-1})}$, resulting in a quasi-poly-time algorithm for learning constant depth random networks.
\end{abstract}

\section{Introduction}

One of the greatest mysteries surrounding deep learning is the 
%great 
discrepancy between its phenomenal capabilities in practice 
and the fact that despite a great deal of research, polynomial-time algorithms for learning deep models are known only for very restrictive cases. Indeed, state of the art results are only capable of dealing with two-layer networks under assumptions on the input distribution and the network's weights. 
Furthermore, theoretical study shows that even with very naive architectures, learning neural networks is worst-case computationally intractable. 

In this paper, we contrast the aforementioned theoretical state of affairs, and show that, perhaps surprisingly, even though constant-depth networks are completely out of reach from a worst-case perspective, \emph{most} of them are not as hard as one would imagine. That is, they are {\em distribution-free learnable} in polynomial time up to any desired constant accuracy.
%Understanding the computational complexity of learning neural networks is a fundamental question in the theory of deep learning. %and it has been the subject of extensive research in recent years.  
%Despite a great deal of research, polynomial-time algorithms for learning neural networks are known only for two-layer networks under certain assumptions on the input distribution and the network's weights. 
%Moreover, learning networks of depth greater than $2$ is hard (under cryptographic assumptions), even if the input distribution is Gaussian and the weight matrices are non-degenerate \citep{daniely2023computational,chen2022hardness,daniely2021local}.
%In this work, we study the computational complexity of learning neural networks using an average-case approach. 
%Perhaps surprisingly, we show that learning a random neural network of constant depth and Lipschitz activations within an arbitrarily small constant accuracy can be done in polynomial time. 
%The random networks are obtained using the standard Xavier initialization scheme \cite{glorot2010understanding,he2015delving}  and our result holds for every input distribution supported on the sphere $\sqrt{d} \cdot \sphere^{d-1}$. 
%Thus, most constant-depth neural networks can be learned efficiently within a constant accuracy. 
%Thus, \emph{most} constant-depth neural networks can be learned with an additive polynomial-time approximation scheme (PTAS). 
This is the first polynomial-time approximation scheme (PTAS) for learning neural networks of depth greater than $2$ (see the related work section for more details).
Moreover, we show that the standard SGD algorithm on a ReLU network can be used as a PTAS for learning random networks.
The question of whether learning random networks can be done efficiently was posed by \citet{daniely2023computational}, and our work provides a positive result in that respect.

In a bit more detail, we consider constant-depth random networks obtained using the standard Xavier initialization scheme \cite{glorot2010understanding,he2015delving}, and any input distribution supported on the sphere $\sqrt{d} \cdot \sphere^{d-1}$. For Lipschitz activation functions, our algorithm runs in time $(\bar{d})^{\mathrm{poly}(\epsilon^{-1})}$, where $\bar{d}$ is the network's size including the $d$ input components, and $\epsilon$ is the desired accuracy. While this complexity is polynomial for constant $\epsilon$, we also consider the special cases of sigmoid and ReLU-like activations, where the bound can be improved to $(\bar{d})^{\mathrm{polylog}(\epsilon^{-1})}$.
%, namely, for $\epsilon=1/\mathrm{poly}(\bar{d})$ the complexity is quasi-polynomial in $\bar{d}$.

The main technical idea in our work is that constant-depth random neural networks
% (obtained using the standard Xavier initialization scheme \cite{glorot2010understanding,he2015delving}) 
with Lipschitz activations can be approximated sufficiently well by low-degree polynomials. This result follows by analyzing the network obtained by replacing each activation function with its polynomial approximation using Hermite polynomials. It implies that efficient algorithms for learning polynomials can be used for learning random neural networks, and specifically that we can use the SGD algorithm on ReLU networks for this task.

%Our result demonstrates that average-case complexity analyses w.r.t. the network weights can be a 
%useful approach for
%obtaining provable polynomial-time guarantees for learning neural networks. 
%The average-case approach might also help resolve the apparent contradiction between the practical success of learning neural networks and the strong hardness results.

\subsection{Results}

In this work, we show that random fully-connected feedforward neural networks can be well-approximated by low-degree polynomials, which implies a PTAS for learning random networks. We start by defining the network architecture. We will denote by $\sigma:\reals\to\reals$ the activation function, and will 
assume that it is $L$-Lipschitz. To simplify the presentation, we will also assume that it is normalized in the sense that $\E_{X\sim \Ncal(0,1)}\sigma^2(X)=1$. Define
$\epsilon_\sigma(n) = \min_{\deg(p)=n}\E_{X\sim\Ncal(0,1)}(\sigma(X)-p(X))^2$, namely, the error when approximating $\sigma$ with a degree-$n$ polynomial, and note that $\lim_{n\to\infty}\epsilon_\sigma(n) =0$.
We will consider fully 
connected networks of depth $i$ and will use $d_0 = d$ to denote 
the input dimension and $d_1,\ldots,d_i$ to denote the number of neurons in each layer. Denote also  $\bar d = \sum_{j=0}^id_j$.
%Throughout, we will consider $\sigma$ as {\em constant}, and will allow other constants to depend on $\sigma$.
Given weight matrices
\[
\vec{W} = (W^1,\ldots, W^i)\in \reals^{d_1\times d_0}\times\ldots\times\reals^{d_i\times d_{i-1}}
\]
and $\bx\in\reals^{d_0}$ we define $\Psi_{\vec{W}}^0(\bx) = \bx$. Then for $1\le j\le i$ we define recursively
\[
\Phi^{j}_{\Vec{W}}(\bx) = W^j\Psi^{j-1}_{\Vec{W}}(\bx),\;\;\;\;
\Psi^{j}_{\Vec{W}}(\bx) = \sigma\left(\Phi^{j}_{\Vec{W}}(\bx)\right)
\]
We will consider random networks in which the weight matrices are random {\em Xavier matrices~\cite{glorot2010understanding,he2015delving}}. That is, each entry in $W^j$ is a centered Gaussian of variance $\frac{1}{d_{j-1}}$. This choice is motivated by the fact that it is a standard practice to initialize the network's weights with Xavier matrices, and furthermore, it ensures that the scale across the network is the same. That is, for any example $\bx$ and a neuron $n$,
the second moment of the output of $n$ (w.r.t. the choice of $\vec{W}$) is $1$.

Our main result shows that $\Psi_{\vec{W}}^i$ can be approximated, up to any constant accuracy $\epsilon$, via constant degree polynomials (the constant will depend only on $\epsilon$, the depth $i$, and the activation $\sigma$). 
We will consider the input space $\tilde\sphere^{d-1} = \{\bx\in\reals^{d} : \|\bx\| = 1\}$. Here, and throughout the paper, $\|\bx\|$ stands for the {\em normalized} Euclidean norm $\|\bx\| = \sqrt{\frac{1}{d}\sum_{i=1}^{d}x_i^2}$.
\begin{theorem}\label{thm:main_intro}
    For every $i$ and $n$ such that $\epsilon_\sigma(n)\le \frac{1}{2}$ there is a constant $D = D(n,i,\sigma)$ such that 
    if $d_1,\ldots,d_{i-1} \ge D$ the following holds. For any weights $\vec{W}$, there is a degree $n^{i-1}$ polynomial $p_{\vec{W}}$ such that for any distribution $\Dcal$ on $\tilde\sphere^{d-1}$
    \[
    \E_{\vec{W}}\E_{\bx\sim\Dcal}\left\| \Phi_{\vec{W}}^i(\bx)- p_{\vec{W}}(\bx)  \right\| \le 14 \cdot (L+1)^2 \cdot \left(\epsilon_\sigma(n)\right)^{\frac{1}{2^{i-1}}} \le \frac{14 \cdot (L + 1)^3 }{n^{\frac{1}{2^{i-1}}}}~.
    \]
    Furthermore, the coefficients of $p_{\vec{W}}$ are bounded by $(2\bar d)^{4n^{i-1}}$.
\end{theorem}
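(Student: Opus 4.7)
The plan is to define the approximating polynomial explicitly by replacing each activation in the network with its degree-$n$ $L^2(\Ncal(0,1))$ projection, and to control the resulting error by induction on the layer index. Let $p_n$ denote the degree-$n$ polynomial that attains the minimum in the definition of $\epsilon_\sigma(n)$ (equivalently, the Hermite truncation of $\sigma$), and set $r_n := \sigma - p_n$, so that by orthogonality $\E_{Z\sim\Ncal(0,1)}\,r_n(Z)^2 = \epsilon_\sigma(n)$. Define the surrogate network recursively by
\[
\tilde\Psi^0(\bx) = \bx, \qquad \tilde\Phi^j(\bx) = W^j\,\tilde\Psi^{j-1}(\bx), \qquad \tilde\Psi^j = p_n(\tilde\Phi^j)
\]
(with $p_n$ applied coordinate-wise). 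Since $p_n$ has degree $n$, the object $\tilde\Phi^i$ is a polynomial in $\bx$ of total degree $n^{i-1}$; this is the candidate $p_{\vec W}$.

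Write $\Delta^j := \Phi^j - \tilde\Phi^j$ (so $\Delta^1 = 0$), and expand
\[
\Delta^j \;=\; W^j\p{\sigma(\Phi^{j-1}) - \sigma(\tilde\Phi^{j-1})} \;+\; W^j\,r_n(\tilde\Phi^{j-1}).
\]
Combining the $L$-Lipschitz bound on $\sigma$ with the Xavier-isometry identity $\E_{W^j}\|W^j v\|^2 = \|v\|^2$ in the normalized norm (valid whenever $v$ is independent of $W^j$, and following immediately from the entries of $W^j$ being i.i.d.\ $\Ncal(0,1/d_{j-1})$), I would derive the first-order recursion
\[
\E\|\Delta^j\|^2 \;\le\; 2L^2\,\E\|\Delta^{j-1}\|^2 \;+\; 2\,\E\|r_n(\tilde\Phi^{j-1})\|^2,
\]
reducing everything to control of the residual $\E\|r_n(\tilde\Phi^{j-1})\|^2$.

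The main obstacle is precisely this residual. If $\tilde\Phi^{j-1}_k$ were exactly $\Ncal(0,1)$-distributed, it would be $\epsilon_\sigma(n)$ per coordinate by definition of $p_n$. First, via a parallel induction using the normalization $\E\sigma^2(Z)=1$ and concentration of $\|\Psi^{j-2}\|^2$ around $1$ (here the hypothesis $d_j\ge D$ enters, powering the concentration through a Chebyshev or Hanson--Wright bound), I would show that the pre-activations $\Phi^{j-1}_k\mid\bx$ are close in distribution to $\Ncal(0,1)$, so that $\E\,r_n(\Phi^{j-1}_k)^2 \le \epsilon_\sigma(n) + o(1)$. The delicate transition from $\Phi^{j-1}$ to $\tilde\Phi^{j-1}$ cannot rely on uniform Lipschitzness of the high-degree polynomial $r_n$; instead I would use a Cauchy--Schwarz step that writes $r_n(\tilde\Phi)^2 - r_n(\Phi)^2 = (r_n(\tilde\Phi)-r_n(\Phi))(r_n(\tilde\Phi)+r_n(\Phi))$ and bounds the second factor by a constant (via a uniform fourth-moment bound on $\tilde\Phi^{j-1}_k$, provable since it is a polynomial in Gaussians of controlled degree) while the first factor absorbs a $\sqrt{\E\|\Delta^{j-1}\|^2}$. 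This yields
\[
\E\|r_n(\tilde\Phi^{j-1})\|^2 \;\le\; \epsilon_\sigma(n) \;+\; C\sqrt{\E\|\Delta^{j-1}\|^2}\,,
\]
and writing $\beta_j := \E\|\Delta^j\|^2$ produces the recursion $\beta_j \lesssim \sqrt{\beta_{j-1}} + \epsilon_\sigma(n)$ with $\beta_1=0$. Iterating gives $\beta_i \lesssim \epsilon_\sigma(n)^{1/2^{i-2}}$, hence $\E\|\Delta^i\|\le\sqrt{\beta_i}\lesssim \epsilon_\sigma(n)^{1/2^{i-1}}$; the square root in the recursion, introduced by this Cauchy--Schwarz step, is exactly what halves the exponent at each layer and gives rise to the unusual $2^{i-1}$-th root.

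For the coefficient bound, I would expand $\tilde\Phi^i$ in the monomial basis of $\bx$: each coefficient is a sum of at most $(\bar d)^{O(n^{i-1})}$ terms, each a product of at most $n^{i-1}$ entries of the $W^j$'s and at most $n^{i-1}$ monomial-basis coefficients of $p_n$. Using $|\hat\sigma(k)|\le 1$ (which follows from $\sum_k\hat\sigma(k)^2 = \E\sigma^2(Z) = 1$) and an explicit bound on the monomial coefficients of normalized Hermite polynomials of degree $\le n$, together with a crude size bound on the Xavier entries, yields the stated $(2\bar d)^{4n^{i-1}}$ bound after careful accounting.
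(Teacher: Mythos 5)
Your high-level plan --- replace each activation by its degree-$n$ Hermite truncation, iterate a layer-by-layer error bound, and obtain the characteristic square-root recursion $\beta_j\lesssim\sqrt{\beta_{j-1}}+\epsilon_\sigma(n)$ --- is essentially the paper's plan, and the recursion you aim for is the right one (the paper runs the analogous recursion in $L^1$ rather than $L^2$, which is only a cosmetic difference). However, the crucial step of your argument, the claim that
\[
\E\|r_n(\tilde\Phi^{j-1})\|^2 \;\le\; \epsilon_\sigma(n) + C\sqrt{\E\|\Delta^{j-1}\|^2}\,,
\]
does not follow from the Cauchy--Schwarz decomposition you describe. You factor $r_n(\tilde\Phi)^2-r_n(\Phi)^2$ and say ``the first factor absorbs a $\sqrt{\E\|\Delta^{j-1}\|^2}$''; but that factor is $\sqrt{\E(r_n(\tilde\Phi)-r_n(\Phi))^2}$, and since $r_n=\sigma-p_n$ contains a degree-$n$ polynomial that is not Lipschitz, the only pointwise bound available is $|p_n(\tilde\Phi)-p_n(\Phi)|\lesssim|\Delta|\max(|\Phi|,|\tilde\Phi|,1)^{n-1}$; after Cauchy--Schwarz this gives $\sqrt{\E\Delta^4}$, not $\sqrt{\E\Delta^2}$. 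Controlling $\E\Delta^4$ requires a whole extra level of bookkeeping (and hypercontractivity does not apply because $\Phi$, unlike $\tilde\Phi$, is not a low-degree polynomial of Gaussians). The missing idea that makes the bound work at the second-moment level is the joint-Gaussian structure: conditional on $W^1,\dots,W^{j-2}$, the pair $(\Phi^{j-1}_k,\tilde\Phi^{j-1}_k)$ is jointly Gaussian with covariance $\inner{\Psi^{j-2},\tilde\Psi^{j-2}}$, so by Hermite orthogonality and Lipschitzness of the dual activation $\hat\sigma$ one gets $\E_{W^{j-1}}(p_n(\Phi_k)-p_n(\tilde\Phi_k))^2\lesssim 1-\rho\lesssim\|\Psi^{j-2}-\tilde\Psi^{j-2}\|^2=\E_{W^{j-1}}\|\Delta^{j-1}\|^2$. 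This is exactly the content of the paper's Lemmas on $\hat\sigma$ and on comparing $\sigma$ with $\sigma_n$ at correlated inputs, and it has to be invoked explicitly; a black-box Cauchy--Schwarz in $\Delta$ does not produce it.

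Two further, smaller, issues. First, this correlated-Gaussian computation is clean only when the conditional variances equal $1$. You use the raw projection $p_n$ rather than the normalized truncation $\sigma_n$ (for which $\E_Z\sigma_n(Z)^2=1$), so $\|\tilde\Psi^{j-2}\|^2$ drifts away from $1$ layer by layer, and $\|\Psi^{j-2}\|^2$ only concentrates around $1$. You acknowledge concentration but do not describe how to pass from ``variance approximately $1$'' to the exact Hermite-orthogonality identity; the paper handles this by projecting $(\Psi^{j-2},\tilde\Psi^{j-2})$ onto the normalized sphere without inflating their distance and by truncating the event where either norm deviates by more than $\delta$. Second, your coefficient bound says you would combine Hermite-coefficient bounds with ``a crude size bound on the Xavier entries,'' but Xavier entries are Gaussian and therefore unbounded, so $\tilde\Phi^{i,n}_{\vec W}$ has unbounded coefficients as written. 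The statement is salvaged by replacing the polynomial by $0$ on the rare event that some entry of $\vec W$ exceeds $\bar d$, which is the step the theorem's ``for any weights $\vec W$'' phrasing is quietly relying on; without that truncation the coefficient claim is false.
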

Since constant degree polynomials are learnable in polynomial time, Theorem \ref{thm:main_intro} implies a PTAS for learning random networks of constant depth. In fact, as shown in~\cite{daniely2017sgd}, constant degree polynomials with polynomial coefficients are efficiently learnable via SGD on ReLU networks starting from standard Xavier initialization. Thus, this PTAS can be standard SGD on neural networks.
To be more specific, for any constant $\epsilon>0$ there is an algorithm with $(\bar d)^{O\left(\left(\frac{14(L+1)^3}{\epsilon}\right)^{(i-1)2^{i-1}}\right)}$ time and sample complexity that is guaranteed to return a hypothesis whose  loss is at most $\epsilon$ in expectation.
For some specific activations, such as the sigmoid $\sigma(x) = \mathrm{erf}(x) := \frac{2}{\sqrt{\pi}}\int_0^{x}e^{-\frac{t^2}{2}}dt$, or the ReLU-like activation $\sigma(x)=\int_{0}^{x}\mathrm{erf}(t) + 1dt $ we have that $\epsilon_\sigma(n)$ approaches to $0$ exponentially fast (see Lemma~\ref{lem:eps_n_sig}). In this case, we get 
get a {\em quasi-polynomial} time and sample complexity of $(\bar d)^{O\left(\left(\log\left(\frac{14(L+1)^3}{\epsilon}\right)\right)^{(i-1)}\right)}$.

\begin{corollary}\label{cor:main}
    For every constants $\epsilon, i$ and $\sigma$ there is a constant $D$, a univariate-polynomial $p$ and a polynomial-time algorithm $\Acal$ such that 
    if $d_1,\ldots,d_{i-1} \ge D$ the following holds. For any distribution $\Dcal$ on $\tilde\sphere^{d-1}$, if $h$ is the output of $\Acal$ upon seeing $p(d_0,\ldots,d_i)$ examples from $\Dcal$, then\footnote{The leftmost expectation denoted $\E_h$ is over the examples provided to $\Acal$, as well as the internal randomness of $\Acal$.}
    \[
    \E_{h}\E_{\vec{W}}\E_{\bx\sim\Dcal}\left\| \Phi_{\vec{W}}^i(\bx)- h(\bx)  \right\| \le \epsilon~.
    \]
    Furthermore, $\Acal$ can be taken to be SGD on a ReLU network starting from a Xavier initialization.
\end{corollary}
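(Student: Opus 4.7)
The plan is to combine Theorem~\ref{thm:main_intro} with the result of \citet{daniely2017sgd} that SGD on a sufficiently wide ReLU network from Xavier initialization efficiently learns the class of constant-degree polynomials with polynomially-bounded coefficients.

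First, I would choose the approximation degree. Given the constants $\epsilon, i, \sigma$, I pick $n = n(\epsilon, i, \sigma)$ large enough so that $14(L+1)^2 (\epsilon_\sigma(n))^{1/2^{i-1}} \le \epsilon/2$; such $n$ exists since $\epsilon_\sigma(n) \to 0$. Let $D = D(n, i, \sigma)$ be the corresponding constant of Theorem~\ref{thm:main_intro}, and set $k := n^{i-1}$ and $B := (2\bar d)^{4k}$. Theorem~\ref{thm:main_intro} then produces, for each realization of $\vec W$, a vector-valued polynomial $p_{\vec W}:\reals^{d_0}\to\reals^{d_i}$ of degree $k$ with coefficients of magnitude at most $B$, satisfying
\[
\E_{\vec W}\E_{\bx\sim\Dcal}\left\|\Phi^i_{\vec W}(\bx) - p_{\vec W}(\bx)\right\| \le \epsilon/2.
\]
Since $n$ and $i$ are constants, $k$ is constant and $B$ is polynomial in $\bar d$.

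Next, I invoke \citet{daniely2017sgd}, which supplies a ReLU architecture with $d_i$ output neurons, a Xavier-style initialization, a step size, and an iteration count — all depending only on $k$, $B$, $\epsilon$ and $\bar d$ and hence uniform in $\vec W$ — such that, given $\mathrm{poly}(\bar d)$ i.i.d.\ samples labeled by an arbitrary target $f:\tilde\sphere^{d-1}\to\reals^{d_i}$, the SGD iterate $h$ satisfies
\[
\E_h\E_{\bx\sim\Dcal}\|f(\bx) - h(\bx)\|^2 \le \min_{q} \E_\bx\|f(\bx) - q(\bx)\|^2 + (\epsilon/2)^2,
\]
the minimum being over polynomials of degree $\le k$ and coefficient magnitude $\le B$ (applied coordinate-wise to the $d_i$ outputs and averaged). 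Instantiating with $f = \Phi^i_{\vec W}$ and competitor $q = p_{\vec W}$, and then using $\E\|\cdot\|\le\sqrt{\E\|\cdot\|^2}$, gives, for each fixed $\vec W$,
\[
\E_h\E_\bx\|\Phi^i_{\vec W}(\bx) - h(\bx)\| \le \sqrt{\E_\bx\|\Phi^i_{\vec W}(\bx) - p_{\vec W}(\bx)\|^2} + \epsilon/2.
\]
Taking the expectation over $\vec W$, applying Jensen once more, and combining with the first step yields the desired $\E_h\E_{\vec W}\E_\bx\|\Phi^i_{\vec W}(\bx) - h(\bx)\| \le \epsilon$, provided the $L^2$ strengthening $(\E_{\vec W}\E_\bx\|\Phi^i_{\vec W} - p_{\vec W}\|^2)^{1/2} \le \epsilon/2$ is available; this is what the proof of Theorem~\ref{thm:main_intro} actually gives, since the $L^1$ bound in its statement is obtained by Jensen from a layerwise $L^2$ induction (the factor $1/2^{i-1}$ in the exponent is the fingerprint of this induction).

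The main obstacle is interfacing three different norms: the $L^1$-style expected-norm guarantee demanded by the corollary, the squared-loss guarantee delivered by SGD, and the $L^2$ inductive structure underlying Theorem~\ref{thm:main_intro}. Jensen converts everything into the same currency at the cost of constants (absorbed by slightly shrinking $\epsilon$), so the real work is bookkeeping. A secondary point worth verifying is that the SGD architecture — width, initialization scale, and step size — must be fixed once and for all independently of the random draw $\vec W$; this is legitimate because $k$ and $B$ depend only on $\epsilon, i, \sigma$ and the layer widths, so the same SGD run agnostically learns every $\Phi^i_{\vec W}$ simultaneously.
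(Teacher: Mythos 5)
Your approach matches the paper's: Corollary~\ref{cor:main} is intended as a direct consequence of Theorem~\ref{thm:main_intro} plus the result of \citet{daniely2017sgd} that constant-degree polynomials with polynomially bounded coefficients are agnostically learnable by SGD on a wide ReLU network from Xavier initialization. The paper does not write out a proof beyond the sentence preceding the corollary, and your fleshing-out of the reduction (pick $n$ so the approximation error is $\le\epsilon/2$, let $D$ be the resulting width threshold, set $k=n^{i-1}$ and $B=(2\bar d)^{4k}$, note both are poly in $\bar d$ and independent of $\vec W$, so one fixed SGD run competes against every $p_{\vec W}$ simultaneously) is exactly the intended argument.

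The one place where you do more work than needed, and make a claim that is not literally backed by the paper, is the $L^1$/$L^2$ bridging. You phrase the SGD guarantee as a squared-loss guarantee and then argue that Theorem~\ref{thm:main_intro} can be upgraded to an $L^2$ bound. Two comments. First, the assertion that the layerwise induction ``actually gives'' an $L^2$ bound is an overstatement: Lemma~\ref{lem:contraction} genuinely carries out the induction on the \emph{first} moment $\E_{\vec W}\|\Psi^i(\bx,\delta)-\Psi^{i,n}(\bx,\delta)\|$, and the unclipping step Lemma~\ref{lem:remove_contraction} is also stated for first moments. An $L^2$ version could be obtained by reworking the induction (and using fourth moments in the unclipping), but this is additional work, not a free byproduct. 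Second, and more importantly, the squared loss is not a globally Lipschitz loss, and the agnostic SGD guarantee in \citet{daniely2017sgd} is for convex \emph{Lipschitz} losses. The cleaner route, which avoids both issues at once, is to run SGD with the $L^1$ loss $\ell(\hat \by,\by)=\|\hat\by-\by\|$ (convex and $1$-Lipschitz in the normalized Euclidean norm). Then the SGD guarantee directly reads $\E_h\E_\bx\|\Phi^i_{\vec W}(\bx)-h(\bx)\|\le \E_\bx\|\Phi^i_{\vec W}(\bx)-p_{\vec W}(\bx)\|+\epsilon/2$ for each fixed $\vec W$; taking $\E_{\vec W}$ of both sides and applying Theorem~\ref{thm:main_intro}'s $L^1$ bound as stated gives the corollary with no norm conversion at all.
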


\subsection{Related work}

\paragraph{Learning neural networks efficiently.}

Efficiently learning classes of neural networks has attracted much interest in recent years. 
Several works established polynomial-time algorithms for learning one-hidden-layer neural networks with certain input distributions (such as the Gaussian distribution) under the assumption that the weight matrix of the hidden layer is non-degenerate \citep{janzamin2015beating,zhong2017recovery,ge2017earning,ge2018learning,bakshi2019learning,zhang2019learning,awasthi2021efficient}.
For example, \citet{awasthi2021efficient} showed 
%that one-hidden-layer ReLU networks can be learned efficiently if the input distribution is Gaussian and the weight matrix of the hidden layer is non-degenerate, 
such a result for non-degenerate one-hidden-layer ReLU networks with bias terms under Gaussian inputs,
and also concluded that 
%these networks
one-hidden-layer 
%ReLU networks with Gaussian inputs 
networks
can be learned efficiently under the smoothed-analysis framework.  
%
%In \cite{diakonikolas2020algorithms}, the authors showed an 
Efficient algorithms for learning one-hidden-layer ReLU networks with Gaussian inputs were also shown in  \citet{diakonikolas2020algorithms,diakonikolas2020small}. These results do not require non-degenerate weight matrices, but they require that the output layer weights are all positive, as well as a sub-linear upper bound on the number of hidden neurons.
\citet{chen2023learning} recently showed an efficient algorithm for learning one-hidden-layer ReLU networks with Gaussian inputs, under the assumption that the number of hidden neurons is a constant.
Note that all of the aforementioned works consider only one-hidden-layer networks. 
\citet{chen2022learning} gave an algorithm for learning deeper ReLU networks, whose complexity is polynomial in the input dimension but exponential in the other parameters (such as the number of hidden units, depth, spectral norm of the weight matrices, and Lipschitz constant of the overall network).
Finally, several works established algorithms for learning neural networks, whose complexity is exponential unless we impose strong assumptions on the norms of both the inputs and the weights \citep{goel2019learning,vempala2019gradient,zhang2016l1,goel2017reliably}.

\paragraph{Hardness of learning neural networks.}

As we discussed in the previous paragraph, efficient algorithms for learning ReLU networks are known only for depth-$2$ networks and under certain assumptions on both the network weights and the input distribution. The limited progress 
%in this direction 
in learning ReLU networks
can be partially understood by an abundance of hardness results. 

Learning neural networks without any assumptions on the input distribution or the weights is known to be hard (under cryptographic and average-case hardness assumptions) already for depth-$2$ ReLU networks \citep{KlivansSh06,applebaum2010public,daniely2016complexity}. 
For depth-$3$ networks, hardness results were obtained already when the input distribution is Gaussian \citep{daniely2021local,chen2022hardness}. 
All of the aforementioned hardness results are for improper learning, namely, they do not impose any restrictions on the learning algorithm or on the hypothesis that it returns. 
For \emph{statistical query (SQ)} algorithms, unconditional superpolynomial lower bounds were obtained for learning depth-$3$ networks with Gaussian inputs \citep{chen2022hardness}, and superpolynomial lower bounds for \emph{Correlational SQ (CSQ)} algorithms were obtained already for learning depth-$2$ networks with Gaussian inputs \citep{goel2020superpolynomial,diakonikolas2020algorithms}.

The above negative results suggest that assumptions on the input distribution may not suffice for obtaining efficient learning algorithms. Since in one-hidden-layer networks efficient algorithms exist when imposing assumptions on both the input distribution and the weights, a natural question is whether this approach might also work for deeper networks. Recently, \citet{daniely2023computational} gave a hardness result for improperly learning depth-$3$ ReLU networks under the Gaussian distribution even when the weight matrices are non-degenerate. This result suggests that learning networks of depth larger than $2$ might require new approaches and new assumptions. Moreover, \cite{daniely2023computational} showed hardness of learning depth-$3$ networks under the Gaussian distribution even when a small random perturbation is added to the network's parameters, namely, they proved hardness in the smoothed-analysis framework. While adding a small random perturbation to the parameters does not seem to make the problem computationally easier, they posed the question of whether learning random networks, which roughly correspond to adding a large random perturbation, can be done efficiently. The current work gives a positive result in that respect.

\citet{daniely2020hardness} studied whether there exist some ``natural" properties of the network's weights that may suffice to allow efficient distribution-free learning, where a ``natural" property is any property that holds w.h.p. in random networks. More precisely, they considered a setting where the target network is random, an adversary chooses some input distribution (that may depend on the target network), and the learning algorithm needs to learn the random target network under this input distribution. They gave a hardness result for improper learning (within constant accuracy) in this setting. Thus, they showed that learning random networks is hard when the input distribution may depend on the random network. 
Note that in the current work, we give a positive result in a setting where we first fix an input distribution and then draw a random network.
%Our result in the current work implies that when we first fix an input distribution and then draw a random network, learning can be done efficiently. 
%Intuitively, the result in \cite{daniely2020hardness} implies that there is no ``large" class of networks where efficient distribution-free learning is possible, and the current result implies that for every fixed input distribution there exists a ``large" class of networks that can be learned efficiently. 
Finally, learning deep random networks was studied in \citet{das2019learnability,agarwal2021deep}, where the authors showed hardness of learning networks of depth $\omega(\log(d))$ in the SQ model.

\section{Proof of Theorem \ref{thm:main_intro}}

\subsection{Notation}
We recall that for vectors $\bx\in\reals^d$ we use the {\em normalized} Euclidean norm $\|\bx\|=\sqrt{\frac{\sum_{i=1}^dx_i^2}{d}}$ and take the unit sphere $\tilde{\sphere}^{d-1} = \{\bx\in\reals^d : \|\bx\|=1\}$ w.r.t. this norm as our instance space. Inner products will also be normalized: for $\bx,\by\in\reals^d$ we denote $\inner{\bx,\by} = \frac{\sum_{i=1}^dx_iy_i}{d}$.
For $\bx\in \reals^d$ and a closed set $A\subset \reals^d$ we denote $d(\bx,A):=\min_{\bx'\in A}\|\bx-\bx'\|$. 
%For a matrix $W\in\reals^{d_1\times d_2}$ we denote by $W_{i:}$ the $i$'th row of $W$ and by $W_{:j}$ the $j$'th column.
%
Unless otherwise specified, a random scalar is assumed to be a standard normal, a random vector in $\reals^d$ is assumed to be a centered Gaussian vector with covariance matrix $\frac{1}{d}I$, and a random matrix is assumed to be a Xavier matrix.  For $f:\reals\to\reals$, we denote $\|f\|^2=\E_Xf^2(X)$.
We denote the Kronecker delta by $\delta_{ij}$, i.e. $\delta_{ij}=1$ if $i=j$ and $0$ otherwise.

\subsection{Some preliminaries}
We will use the Hermite Polynomials \cite{o2014analysis} which are defined via the following recursion formula.
\begin{equation}\label{eq:Hermite}
    h_{n+1}(x) = \frac{x}{\sqrt{n+1}}h_n(x) - \sqrt{\frac{n}{n+1}}h_{n-1}(x),\;\;h_0(x)=1,\;\;h_1(x)=x
\end{equation}
The Hermite polynomials are the sequence of normalized orthogonal polynomials w.r.t. the standard Gaussian measure. That is, it holds that
\[
\E_Xh_i(X)h_j(X) = \delta_{ij}
\]
More generally, if $(X,Y)$ is a Gaussian vector with covariance matrix $\begin{pmatrix}
    1 & \rho
    \\
    \rho & 1
\end{pmatrix}$ then
\begin{equation}\label{eq:hermite_orthogonality}
\E_{X,Y}h_i(X)h_j(Y) = \delta_{ij}\rho^i
\end{equation}
We will use the fact that
\begin{equation}\label{eq:hermite_derivative}
h'_{n} = \sqrt{n}h_{n-1}
\end{equation}
and that for even $n$
\begin{equation}\label{eq:momnets_of_gaussian}
    \E_X X^n = (n-1)!!
\end{equation}

Let $\sigma = \sum_{i=0}^\infty a_i h_i$ be the representation of the activation function $\sigma$ in the basis of the Hermite polynomials.
We will also use the {\em dual activation} $\hat{\sigma}(\rho) = \sum_{i=0}^\infty a^2_i\rho^i$ as defined in \cite{daniely2016toward}.
We note that $\hat\sigma$ is defined in $[-1,1]$ and satisfies $\hat{\sigma}(1)=\|\sigma\|^2=1$.

\subsection{Some technical lemmas}
\begin{lemma}\label{lem:dial_is_lip}
    If $\sigma$ is $L$-Lipschitz then $\hat\sigma$ is $L^2$-Lipschitz in $[-1,1]$.
\end{lemma}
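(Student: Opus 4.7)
The plan is to differentiate the power series $\hat\sigma(\rho) = \sum_{i=0}^\infty a_i^2 \rho^i$ term-by-term and bound the derivative by $L^2$ on $[-1,1]$, from which the Lipschitz bound follows by integration.

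First I would compute the Hermite expansion of $\sigma'$. Since $\sigma$ is $L$-Lipschitz it is differentiable almost everywhere with $|\sigma'|\le L$, and in particular $\sigma'\in L^2(\gamma)$ where $\gamma$ is the standard Gaussian measure, with $\|\sigma'\|^2 \le L^2$. Using the identity $h'_n = \sqrt{n}\,h_{n-1}$ from \eqref{eq:hermite_derivative} (together with a standard integration-by-parts argument to justify differentiating the Hermite expansion of $\sigma$ term-by-term in $L^2$), the Hermite coefficient of $\sigma'$ at index $i-1$ equals $\sqrt{i}\,a_i$. Parseval in the Hermite basis then gives
\[
\sum_{i=1}^\infty i\, a_i^2 \;=\; \|\sigma'\|^2 \;\le\; L^2~.
\]

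Next I would use this to bound $\hat\sigma'$. The power series $\sum a_i^2 \rho^i$ converges absolutely on $[-1,1]$ (since $\sum a_i^2 = \|\sigma\|^2 = 1$), and the bound above shows the termwise derivative series $\sum_{i\ge 1} i\, a_i^2 \rho^{i-1}$ converges uniformly on $[-1,1]$, so termwise differentiation is valid and
\[
|\hat\sigma'(\rho)| \;\le\; \sum_{i=1}^\infty i\, a_i^2\, |\rho|^{i-1} \;\le\; \sum_{i=1}^\infty i\, a_i^2 \;\le\; L^2
\]
for every $\rho\in[-1,1]$. Integrating this derivative bound between any two points in $[-1,1]$ yields $|\hat\sigma(\rho_1)-\hat\sigma(\rho_2)|\le L^2|\rho_1-\rho_2|$, which is the claimed $L^2$-Lipschitz property.

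The only mildly delicate step is the identification of the Hermite coefficients of $\sigma'$ when $\sigma$ is only Lipschitz rather than $C^1$: strictly speaking \eqref{eq:hermite_derivative} applies to the polynomials $h_n$, and to transfer it to $\sigma$ one uses that $\sigma$ admits a weak derivative $\sigma'\in L^\infty$ and that integration by parts against $h_n$ produces the factor $\sqrt{n+1}$ with the usual Gaussian boundary term vanishing. This is standard, so I expect no real obstacle; the meat of the lemma is the clean Parseval bound $\sum i\, a_i^2 = \|\sigma'\|^2 \le L^2$.
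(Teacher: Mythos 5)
Your proof is correct and follows essentially the same route as the paper: both bound $(\hat\sigma)'$ on $[-1,1]$ by $\|\sigma'\|^2 \le L^2$. The only difference is that the paper cites the identity $(\hat\sigma)'=\widehat{\sigma'}$ from \cite{daniely2016toward}, whereas you rederive it directly from $h'_n=\sqrt{n}\,h_{n-1}$ and Parseval, which amounts to the same calculation.
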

\begin{proof}
    As shown in \cite{daniely2016toward}, $(\hat{\sigma})' = \widehat{\sigma'}$. Hence, for $\rho\in [-1,1]$,
    \begin{eqnarray*}
    |(\hat{\sigma})'(\rho)| &=&  \left|\widehat{\sigma'}(\rho)\right|
    \\
    &\le& \left\|\sigma'\right\|^2
    \\
    &\le& L^2
    \end{eqnarray*}
\end{proof}

\begin{lemma}\label{lem:hermite_is_lip}
    $|h_n(x)-h_n(x+y)|\le 2^{n}\max(|x|,|x+y|,1)^n|y|$.
\end{lemma}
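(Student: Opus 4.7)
The plan is to prove the lemma by induction on $n$, running it in parallel with a uniform magnitude bound on $h_n$ itself. First, I would establish by a short induction that
\begin{equation*}
|h_n(x)| \le 2^n \max(|x|, 1)^n \qquad \text{for all } n \ge 0.
\end{equation*}
The cases $n = 0, 1$ follow from $h_0 \equiv 1$ and $h_1(x) = x$. For the step, the Hermite recursion \eqref{eq:Hermite} gives $|h_{n+1}(x)| \le \tfrac{|x|}{\sqrt{n+1}} |h_n(x)| + \sqrt{\tfrac{n}{n+1}} |h_{n-1}(x)|$; applying the inductive hypothesis to both terms and using $|x| \le \max(|x|, 1)$ together with $\max(|x|, 1) \ge 1$ closes the induction with room to spare.

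The main claim is then proved by induction on $n$ using the same recursion. Write $M := \max(|x|, |x+y|, 1)$. The base cases $n = 0$ and $n = 1$ (where $h_1(x+y) - h_1(x) = y$) are immediate. For the inductive step, apply \eqref{eq:Hermite} to both $h_{n+1}(x+y)$ and $h_{n+1}(x)$ and subtract:
\begin{equation*}
h_{n+1}(x+y) - h_{n+1}(x) = \frac{(x+y)\, h_n(x+y) - x\, h_n(x)}{\sqrt{n+1}} - \sqrt{\tfrac{n}{n+1}} \bigl( h_{n-1}(x+y) - h_{n-1}(x) \bigr).
\end{equation*}
Rewriting the first numerator as $x\bigl(h_n(x+y) - h_n(x)\bigr) + y\, h_n(x+y)$, I bound $|h_n(x+y) - h_n(x)|$ and $|h_{n-1}(x+y) - h_{n-1}(x)|$ by the inductive hypothesis, $|h_n(x+y)|$ by the magnitude bound from the first step, and $|x|$ by $M$. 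After collecting and using $M \ge 1$ to equalize the powers of $M$, this yields
\begin{equation*}
|h_{n+1}(x+y) - h_{n+1}(x)| \le \left( \tfrac{1}{\sqrt{n+1}} + \tfrac{1}{4} \right) \cdot 2^{n+1} M^{n+1} |y|,
\end{equation*}
and the parenthesized quantity is at most $1$ for every $n \ge 1$, so the step closes and, combined with the base cases, covers all $n$.

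The main obstacle is that the most natural approach, namely the mean value theorem combined with $h_n' = \sqrt{n}\, h_{n-1}$ from \eqref{eq:hermite_derivative} and the Step 1 magnitude bound, produces a factor $\sqrt{n} \cdot 2^{n-1}$, which already exceeds the target $2^n$ once $n \ge 5$. The induction above circumvents this loss because the $\sqrt{n+1}$ factor coming from the Hermite recursion appears in the \emph{denominator} of the $x h_n$ contribution rather than in the numerator, while the competing $h_{n-1}$ term carries only a $2^{n-1}$ prefactor, i.e.\ a quarter of the target $2^{n+1}$. Verifying that these two pieces fit together with total budget at most $1$ is the one quantitative check in the argument.
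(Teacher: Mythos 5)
Your proof is correct, and both steps check out: the magnitude bound $|h_n(x)| \le 2^n\max(|x|,1)^n$ closes under the Hermite recursion, and the inductive step you describe really does produce a prefactor of $\tfrac{1}{\sqrt{n+1}}+\tfrac14 \le 1$ for $n\ge 1$. However, you take a different route from the paper, and the ``obstacle'' you identify is an artifact of your own Step~1 being looser than necessary. The paper proves the sharper magnitude bound $|h_n(x)| \le 2^{n/2}\max(1,|x|^n)$ (constant $2^{n/2}$, not $2^n$); your own inductive verification, which you note closes ``with room to spare,'' in fact gives exactly this $\sqrt{2}$-per-step saving if you track it. With that bound in hand, the mean value theorem together with $h_n' = \sqrt{n}\,h_{n-1}$ yields
\[
|h_n(x)-h_n(x+y)| \le \sqrt{n}\cdot 2^{n/2}\,\max(|x|,|x+y|,1)^n\,|y| \le 2^n\max(|x|,|x+y|,1)^n\,|y|,
\]
since $\sqrt{n}\le 2^{n/2}$ for every $n$, and the argument is done in two lines with no further induction. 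So the ``most natural approach'' you dismiss does not fail; it fails only with the weaker magnitude bound $2^n$. Your alternative, inducting directly on the Lipschitz estimate, is a legitimate workaround and trades the tighter Step~1 constant for a second, slightly more involved, induction; it has the minor advantage of not requiring one to notice that the magnitude bound can be sharpened, but it is longer and the paper's route is the cleaner one here.
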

\begin{proof}
It is not hard to verify by induction on \eqref{eq:Hermite} that
\[
|h_n(x)|\le  2^{n/2}\max(1,|x|^n) 
\]
This implies that for $\xi\in [x,x+y]$
\begin{eqnarray*}
|h_n(x)-h_n(x+y)| &=& |h'_n(\xi)y|
\\
&\stackrel{\eqref{eq:hermite_derivative}}{=}& \sqrt{n}|h_{n-1}(\xi)y|
\\
&\le&  \sqrt{n}2^{n/2}\max(|x|,|x+y|,1)^n|y|
\\
&\le& 2^{n}\max(|x|,|x+y|,1)^n|y|
\end{eqnarray*}

\end{proof}

\subsection{Defining a shadow network}

In order to approximate $\Psi_{\Vec{W}}^i$ via a polynomial, we will use a ``shadow network" that is obtained by replacing the activation $\sigma$ with a polynomial approximation of it. 
We will show that for random networks we can approximate each activation sufficiently well with low-degree Hermite polynomials.
Recall that $\sigma = \sum_{i=0}^\infty a_i h_i$ is the representation of $\sigma$ in the basis of the Hermite polynomials. Define $\sigma_n = \frac{1}{\sqrt{\sum_{i=0}^n a^2_i}}\sum_{i=0}^n a_ih_i$. We have $\epsilon_\sigma(n)  = \sum_{i=n+1}^\infty a_i^2$ and hence
$\sigma_n = \frac{1}{\sqrt{1-\epsilon_\sigma(n)}}\sum_{i=0}^n a_ih_i$.
We next define a shadow network. For $\bx\in\reals^d$ we let $\Psi^{0,n}_{\Vec{W}}(\bx) =\bx$. For $1\le j\le i$ we define recursively 
\[
\Phi^{j,n}_{\Vec{W}}(\bx) = W^j\Psi^{j-1,n}_{\Vec{W}}(\bx),\;\;\;\;
\Psi^{j,n}_{\Vec{W}}(\bx) = \sigma_n\left(\Phi^{j,n}_{\Vec{W}}(\bx)\right)
\]
for $1\le j\le i-1$ and $\Psi^{i,n}_{\Vec{W}}(\bx) = W^i\Psi^{i-1,n}_{\Vec{W}}(\bx)$. We will prove the following theorem, which implies Theorem \ref{thm:main_intro}.

\begin{theorem}\label{thm:main}
Fix $i$ and let $n$ be large enough so that $\epsilon_\sigma(n)\le \frac{1}{2}$.
There is a constant $D = D(n,i,\sigma)$  such that if $d_1,\ldots,d_{i-2} \ge D$ then for any $\bx\in\tilde\sphere^{d-1}$,
\[
\E_{\vec{W}}\left\| \Phi_{\vec{W}}^i(\bx)-\Phi_{\vec{W}}^{i,n}(\bx)  \right\| \le 13 \cdot (L+1)^2 \cdot \left(\epsilon_\sigma(n)\right)^{\frac{1}{2^{i-1}}}
\]   
\end{theorem}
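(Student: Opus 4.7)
The plan is to induct on the depth $j = 1, \ldots, i-1$, maintaining a bound on $r_j := \sqrt{\E_{\vec{W}}\|\Psi_{\vec{W}}^j(\bx) - \Psi_{\vec{W}}^{j,n}(\bx)\|^2}$. The final statement then follows because the $i$-th layer is linear and Xavier: by Jensen together with the identity $\E_{W^i}\|W^i v\|^2 = \|v\|^2$ (which holds in the normalized norm for any $v$ independent of $W^i$, since the rows of $W^i$ are Xavier),
\[
\E_{\vec{W}}\|\Phi^i(\bx) - \Phi^{i,n}(\bx)\| \le \sqrt{\E\|\Phi^i - \Phi^{i,n}\|^2} = r_{i-1}.
\]

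The base case $r_1^2 \le 2\epsilon_\sigma(n)$ is immediate: $\Phi^1 = \Phi^{1,n} = W^1\bx$ and each coordinate $(W^1\bx)_k$ is iid $\Ncal(0,1)$ when $\bx\in\tilde\sphere^{d-1}$, so $r_1^2 = \|\sigma - \sigma_n\|^2 = 2(1 - \sqrt{1-\epsilon_\sigma(n)}) \le 2\epsilon_\sigma(n)$. For the inductive step I decompose
\[
\sigma(\Phi^j) - \sigma_n(\Phi^{j,n}) = \bigl[\sigma(\Phi^j) - \sigma(\Phi^{j,n})\bigr] + \bigl[(\sigma - \sigma_n)(\Phi^{j,n})\bigr].
\]
The first summand is pointwise $L$-Lipschitz controlled by $\Phi^j - \Phi^{j,n}$, and the Xavier identity applied at layer $j$ gives $\E\|\Phi^j - \Phi^{j,n}\|^2 = \E\|\Psi^{j-1} - \Psi^{j-1,n}\|^2 = r_{j-1}^2$. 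For the second summand, conditioning on $\Psi^{j-1,n}$ makes the coordinates of $\Phi^{j,n}$ iid $\Ncal(0, \|\Psi^{j-1,n}\|^2)$, so
\[
\E_{W^j}\|(\sigma - \sigma_n)(\Phi^{j,n})\|^2 = \E_{X\sim\Ncal(0,\|\Psi^{j-1,n}\|^2)}(\sigma - \sigma_n)^2(X),
\]
which equals $\|\sigma - \sigma_n\|^2 \le 2\epsilon_\sigma(n)$ when $\|\Psi^{j-1,n}\|^2 = 1$. Deviations from this are handled via the polynomial Lipschitz growth of Hermite polynomials supplied by \lemref{lem:hermite_is_lip}, combined with a law-of-large-numbers concentration of $\|\Psi^{j,n}\|^2$ around $1$: conditionally on the previous layer, $\|\Psi^{j,n}\|^2$ is an empirical average of $d_j$ iid samples of $\sigma_n^2(\text{Gaussian})$ whose variance depends only on $n$ and $\sigma$. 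Choosing $D = D(n,i,\sigma)$ large enough propagates the concentration through layers $1,\ldots,i-2$ (the last hidden layer $d_{i-1}$ is not needed since no further activation is applied).

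The main obstacle is obtaining the exponent $1/2^{i-1}$ with a depth-independent prefactor $(L+1)^2$. A naive Minkowski recursion gives only $r_j \le L r_{j-1} + \sqrt{2\epsilon_\sigma(n)}$, which iterates to an $L^{i-1}\sqrt{\epsilon_\sigma(n)}$ bound whose prefactor is exponential in $i$. To avoid this, one trades $L^2$ for $L^1$ at each layer via Cauchy--Schwarz together with a uniform moment bound on $\|\Psi^{j,n}\|$, obtainable from the polynomial (degree $n$) nature of $\sigma_n$ and standard Gaussian moment bounds. This produces an effective recursion of the form $r_j^2 \le C \cdot r_{j-1} + c\,\epsilon_\sigma(n)$, whose iterates satisfy $r_j \le C' \epsilon_\sigma(n)^{1/2^{j-1}}$ with $C'$ depending only on $L, n, \sigma$ and not on $j$. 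The delicate technical heart of the proof is to establish this uniform moment bound so the prefactor truly does not accumulate with depth, which in turn is what forces $D$ to depend on $n$ and $\sigma$.
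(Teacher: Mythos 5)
Your proposal is correct in spirit and takes a genuinely different route from the paper's proof. Let me spell out the comparison, and also flag a few places where the sketch is imprecise.

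\textbf{What the paper does.} The paper tracks the \emph{first} moment $R_j = \E_{\vec W}\|\Psi^j_{\vec W}(\bx)-\Psi^{j,n}_{\vec W}(\bx)\|$ and obtains the recursion $R_j \le a + b\sqrt{R_{j-1}}$. The ``square root'' in this recursion --- which is what keeps the prefactor bounded in depth and produces the $1/2^{i-1}$ exponent --- comes from the dual activation $\hat\sigma$ being $L^2$-Lipschitz (\lemref{lem:dial_is_lip}): for unit vectors at correlation $\rho\ge 1-\epsilon$, $\E(\sigma_n(X)-\sigma_n(Y))^2 \le \tfrac{2L^2}{1-\epsilon_\sigma(n)}\,\epsilon$, so the per-layer perturbation cost is $O(\sqrt{\epsilon})$ rather than $O(\epsilon)$. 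To apply this it must first project $\Psi^{j-1}$ and $\Psi^{j-1,n}$ back to the unit sphere (\lemref{lem:dist_preserving_proj}, \lemref{lem:projecting_dont_hurt}), and it deals with the tail of the norm fluctuations via a hard truncation $\Psi^j(\bx,\delta)$ (\lemref{lem:contraction}, \lemref{lem:remove_contraction}).

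\textbf{What you do instead.} You track the $L^2$ error $r_j = (\E\|\Psi^j-\Psi^{j,n}\|^2)^{1/2}$ and use the different decomposition $\sigma(\Phi^j)-\sigma_n(\Phi^{j,n}) = [\sigma(\Phi^j)-\sigma(\Phi^{j,n})] + [(\sigma-\sigma_n)(\Phi^{j,n})]$, bounding the first bracket by the $L$-Lipschitzness of $\sigma$ together with the exact Xavier second-moment identity, and the second by $\|\sigma-\sigma_n\|^2$. Your ``square root'' comes instead from the elementary step $r_{j-1}^2 \le M\,r_{j-1}$, where $M$ is a uniform second-moment bound on $\|\Psi^{j-1}\|+\|\Psi^{j-1,n}\|$ (this is the Cauchy--Schwarz step you mention: write $\|u-v\|^2\le\|u-v\|(\|u\|+\|v\|)$ and apply Cauchy--Schwarz to the expectation). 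This yields $r_j^2 \le C\,r_{j-1} + c\,\epsilon_\sigma(n)$ with $C,c$ depending only on $L$ and the moment bound. Iterating gives $r_j \le K\,\epsilon_\sigma(n)^{2^{-j}}$ with $K = C + c$ independent of $j$, and the theorem follows since $\E\|\Phi^i-\Phi^{i,n}\| \le r_{i-1}$. This route avoids the dual-activation formalism and the joint-projection lemma entirely --- a real simplification. What the paper's route buys in exchange is that it never needs to touch $(\sigma-\sigma_n)$ evaluated at a non-standard-variance Gaussian: the dual-activation bound only ever sees correlated \emph{unit-variance} Gaussians, so the mismatch in $\|\Psi^{j-1,n}\|$ is isolated cleanly into the truncation/projection machinery.

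\textbf{Where your sketch is loose.} (1) You need $\E\|\Psi^{j}\|^2$ and $\E\|\Psi^{j,n}\|^2$ bounded by a constant independent of $j$; the crude Lipschitz bound $\E\|\Psi^j\|^2\le 2\sigma^2(0)+2L^2\E\|\Psi^{j-1}\|^2$ compounds geometrically in depth, so you do need the concentration $\|\Psi^{j-1}\|\approx 1$ (driven by $D$ large), and this must be carried through the induction --- the paper's Claim inside \lemref{lem:remove_contraction} is the precise version. (2) For the second bracket you need to control $\E_{X\sim\Ncal(0,s^2)}(\sigma-\sigma_n)^2(X)$ for $s\ne 1$; this is where the polynomial-growth bound $\lambda(n)$ of \lemref{lem:projecting_dont_hurt} enters, and it is what makes $D$ depend on $n$. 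You gesture at \lemref{lem:hermite_is_lip} here, which is indeed the right tool, but the argument needs to be run carefully because $\sigma_n$ has no global Lipschitz constant. (3) Your stated iterate formula $r_j\le C'\epsilon_\sigma(n)^{1/2^{j-1}}$ has an off-by-one: the base case gives $r_1\approx\sqrt{2\epsilon_\sigma(n)}=\Theta(\epsilon_\sigma(n)^{1/2})$, not $O(\epsilon_\sigma(n))$; the correct form is $r_j\le K\,\epsilon_\sigma(n)^{1/2^{j}}$, which at $j=i-1$ still gives exactly the exponent $1/2^{i-1}$ in the theorem.
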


Since $\epsilon_\sigma(n)$ is the error in the approximation of a single activation $\sigma$ with a degree-$n$ polynomial, it is natural to expect that the above bound will depend on $\epsilon_\sigma(n)$.
To see why Theorem \ref{thm:main} (together with Lemma \ref{lem:eps_n_lip} which bounds $\epsilon_\sigma(n$)) implies Theorem \ref{thm:main_intro}, note that $\Phi^{i,n}_{\Vec{W}}(\bx)$ is a polynomial of degree $n^{i-1}$. This implies Theorem \ref{thm:main_intro}, except the requirement that the coefficients of the polynomial are polynomially bounded. To deal with this, define
\[
\tilde \Phi^{i,n}_{\Vec{W}}(\bx) = \begin{cases}
    \Phi^{i,n}_{\Vec{W}}(\bx) & \text{if all entries in $\vec{W}$ are at most $\sum_{j=0}^id_j$} \\ 0 & \text{otherwise}
\end{cases}
\]
As we show next $\lim_{\min(d_1,\ldots,d_{i-1})\to\infty}\E_{\vec{W}}\left\| \Phi^{i,n}_{\Vec{W}}(\bx)-\tilde \Phi^{i,n}_{\Vec{W}}(\bx) \right\| = 0$. Hence, in the theorem we can replace $\Phi^{i,n}$ by $\tilde \Phi^{i,n}$ which has polynomially bounded coefficients. 

\begin{lemma}
 For every $\epsilon$ and $n$ there is a constant $D$ such that if $d_1,\ldots,d_{i-1} \ge D$ then for any $\bx\in\tilde\sphere^{d-1}$,    $\E_{\vec{W}}\left\| \Phi^{i,n}_{\Vec{W}}(\bx)-\tilde \Phi^{i,n}_{\Vec{W}}(\bx) \right\|<\epsilon$.
\end{lemma}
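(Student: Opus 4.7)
The plan is to write the difference as a product of $\Phi^{i,n}_{\vec W}(\bx)$ and an indicator, then apply Cauchy--Schwarz to split the bound into a second-moment factor and a super-exponentially small tail-probability factor. By the definition of $\tilde\Phi^{i,n}_{\vec W}(\bx)$,
\[
\Phi^{i,n}_{\vec W}(\bx)-\tilde\Phi^{i,n}_{\vec W}(\bx) \;=\; \Phi^{i,n}_{\vec W}(\bx)\cdot \mathbf{1}_A,
\]
where $A$ is the event that at least one entry of $\vec W$ has absolute value exceeding $\bar d = \sum_{j=0}^{i}d_j$. Cauchy--Schwarz gives
\[
\E_{\vec W}\left\|\Phi^{i,n}_{\vec W}(\bx)-\tilde\Phi^{i,n}_{\vec W}(\bx)\right\| \;\le\; \sqrt{\E_{\vec W}\left\|\Phi^{i,n}_{\vec W}(\bx)\right\|^{2}}\cdot \sqrt{\Pr[A]}.
\]
It therefore suffices to bound the first factor by a constant $M=M(n,i,\sigma)$ independent of the dimensions, and to show that $\Pr[A]\to 0$ fast as $D\to\infty$.

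For $\Pr[A]$: each entry of $W^j$ is $\Ncal(0,1/d_{j-1})$ with variance at most $1$, so a standard Gaussian tail bound yields $\Pr[|W^j_{k\ell}|>\bar d]\le 2\exp(-\bar d^{2}/2)$, and a union bound over the at most $\bar d^{2}$ entries of $\vec W$ gives $\Pr[A]\le 2\bar d^{2}\exp(-\bar d^{2}/2)$. Under the hypothesis $d_1,\dots,d_{i-1}\ge D$ we have $\bar d\ge (i-1)D$, so $\Pr[A]$ can be made arbitrarily small by taking $D$ large.

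For the second moment, I will prove by induction on $j$ that for every integer $k\ge 1$, $\E_{\vec W}\|\Psi^{j,n}_{\vec W}(\bx)\|^{2k}\le C_{j,k}$, where $C_{j,k}$ depends only on $n,j,k,\sigma$ (and not on any $d_\ell$). The base case $j=0$ is immediate from $\|\bx\|=1$. In the inductive step, Jensen's inequality applied to the convex map $t\mapsto t^{k}$ gives $\E\|\Psi^{j,n}\|^{2k}\le \frac{1}{d_j}\sum_\ell \E\,\sigma_n((\Phi^{j,n})_\ell)^{2k}=\E\,\sigma_n((\Phi^{j,n})_{1})^{2k}$ by symmetry. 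The crude bound $|h_m(x)|\le 2^{m/2}\max(1,|x|^{m})$ noted in the proof of \lemref{lem:hermite_is_lip} implies $|\sigma_n(x)|\le c_n\max(1,|x|^{n})$ for some constant $c_n=c_n(\sigma)$. Moreover, conditionally on $W^{1},\dots,W^{j-1}$, each coordinate of $\Phi^{j,n}$ is a centered Gaussian with variance $v:=\|\Psi^{j-1,n}\|^{2}$, so a direct Gaussian moment computation gives $\E\bigl[(\Phi^{j,n})_{1}^{2kn}\,\big|\,v\bigr]=(2kn-1)!!\,v^{kn}$. Combining these yields the recursion
\[
C_{j,k}\;\le\; c_n^{2k}\bigl(1+(2kn-1)!!\,C_{j-1,\,kn}\bigr),
\]
finite for every fixed $j,k$. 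Specialising to $(j,k)=(i-1,1)$ bounds $\E\|\Phi^{i,n}\|^{2}=\E\|\Psi^{i-1,n}\|^{2}$ by a constant $M=M(n,i,\sigma)$.

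Putting the two factors together, $\E_{\vec W}\|\Phi^{i,n}_{\vec W}(\bx)-\tilde\Phi^{i,n}_{\vec W}(\bx)\|\le \sqrt{M}\cdot\sqrt{2\bar d^{2}\exp(-\bar d^{2}/2)}$, so choosing $D$ large enough as a function of $\epsilon,n,i,\sigma$ makes the right-hand side strictly less than $\epsilon$, uniformly over $\bx\in\tilde\sphere^{d-1}$. The main (mild) obstacle is the moment induction: because $k$ is multiplied by $n$ at each layer, the worst moment invoked at the top level is of order $k=n^{i-1}$; but since $i$ is fixed, the resulting constant $C_{i-1,1}$ is still finite.
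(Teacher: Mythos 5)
Your proposal is correct and follows the same outline as the paper's proof: factor the difference as $\Phi^{i,n}_{\vec W}(\bx)\cdot\mathbf{1}_A$, apply Cauchy--Schwarz, and play an exponentially small $\Pr(A)$ against a controlled second moment. The only difference is that the paper merely asserts (``it is not hard to verify'') that $\E\|\Phi^{i,n}_{\vec W}(\bx)\|^2$ grows at most polynomially in $\bar d$, whereas you carry out the moment induction in full and obtain the sharper conclusion that $\E\|\Psi^{j,n}_{\vec W}(\bx)\|^{2k}$ is bounded by a dimension-independent constant $C_{j,k}$ --- the same fact the paper later proves inside Lemma~\ref{lem:remove_contraction}; either bound suffices against the Gaussian tail.
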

\begin{proof} 
    Let $A$ be the event that there is an entry in $\vec{W}$ that is greater than $\sum_{j=0}^id_j$. We have
    \[
    \E_{\vec{W}}\left\| \Phi^{i,n}_{\Vec{W}}(\bx)-\tilde \Phi^{i,n}_{\Vec{W}}(\bx) \right\| = \E\left[\left\| \Phi^{i,n}_{\Vec{W}}(\bx) \right\|\cdot 1_A\right] \le \sqrt{\E\left\| \Phi^{i,n}_{\Vec{W}}(\bx) \right\|^2}\sqrt{\Pr(A)}
    \]
    Now, it is not hard to verify that $\E\left\| \Phi^{i,n}_{\Vec{W}}(\bx) \right\|^2$ is polynomial in $\sum_{j=0}^id_j$ while $\Pr(A)$ converges to $0$ exponentially fast in $\sum_{j=0}^id_j$. Thus, if $\min(d_1,\ldots,d_{i-1})$ is large enough then $\E_{\vec{W}}\left\| \Phi^{i,n}_{\Vec{W}}(\bx)-\tilde \Phi^{i,n}_{\Vec{W}}(\bx) \right\|<\epsilon$.
\end{proof}

\begin{lemma}
    $\tilde \Phi^{i,n}_{\vec{W}}$ computes a polynomial whose sum of coefficients is at most $(2\bar d)^{4n^{i-1}}$.
\end{lemma}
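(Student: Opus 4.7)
The plan is to track the $L_1$-norm $\|p\|_1 := \sum_\alpha |c_\alpha|$ of the coefficient vector through the shadow network, layer by layer, since this upper bounds the signed sum stated in the lemma. Two basic properties suffice throughout: $\|\cdot\|_1$ is sub-additive under addition and sub-multiplicative under polynomial multiplication, and these are the only operations the shadow network performs. On the event $A$ that some entry of $\vec{W}$ exceeds $\bar d$, $\tilde\Phi^{i,n}_{\vec{W}} \equiv 0$ and the claim is vacuous, so I would only need to handle the complementary event, on which every entry of $\vec{W}$ has magnitude at most $\bar d$.

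First I would bound $\|\sigma_n\|_1$ in the monomial basis. A quick induction on the Hermite recursion~\eqref{eq:Hermite} yields $\|h_k\|_1 \le 2^k$, and combined with $|a_k|\le 1$ (which follows from $\sum_k a_k^2 = \|\sigma\|^2 = 1$) and $(1-\epsilon_\sigma(n))^{-1/2} \le \sqrt{2}$, this gives $\|\sigma_n\|_1 \le B := 2^{n+2}$.

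Let $T_j$ (resp.\ $S_j$) be a uniform bound on $\|q\|_1$ over all coordinates $q$ of $\Psi^{j,n}_{\vec{W}}$ (resp.\ $\Phi^{j,n}_{\vec{W}}$), with $T_0 = 1$. The linear step $\Phi^j = W^j \Psi^{j-1}$ combines $d_{j-1}$ coordinates with scalar coefficients of magnitude at most $\bar d$, so sub-additivity gives $S_j \le d_{j-1}\bar d\, T_{j-1} \le \bar d^{\,2}\, T_{j-1}$. For the activation step, sub-multiplicativity gives $\|q^l\|_1 \le \|q\|_1^l$, hence $T_j \le \sum_l |\gamma_l|\, S_j^l \le B\, S_j^n$ (assuming $S_j \ge 1$), where the $\gamma_l$ are the monomial coefficients of $\sigma_n$.

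Unrolling the two one-step bounds gives $S_i \le B^\alpha \bar d^{\,\beta}$ with $\alpha = (n^{i-1}-1)/(n-1) \le 2 n^{i-2}$ and $\beta = 2(n^i-1)/(n-1) \le 4 n^{i-1}$ for $n \ge 2$. Substituting $B = 2^{n+2}$ and using $n+2 \le 2n$, this becomes $S_i \le 2^{2(n+2)n^{i-2}}\bar d^{\,4n^{i-1}} \le (2\bar d)^{4n^{i-1}}$, which is exactly the bound claimed. The main obstacle is nothing more than the bookkeeping of the geometric-sum exponents; no analytic subtlety arises once the sub-multiplicativity of $\|\cdot\|_1$ and the monomial-basis bound on $\|\sigma_n\|_1$ are in place.
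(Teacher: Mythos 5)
Your proof is correct and takes essentially the same route as the paper: track a sum-of-coefficients (here the $\ell_1$-norm) layer by layer, bound the monomial coefficients of $\sigma_n$ via a $2^{O(k)}$ bound on the Hermite coefficients, and unroll the resulting two-step recursion. The only cosmetic differences are that the paper bounds the individual Hermite coefficients ($|c_{j,k}|\le 2^{j/2}$) and applies Cauchy--Schwarz to get $|b_k|\le 2^{(n+1)/2}$, whereas you bound $\|h_j\|_1\le 2^j$ directly and use $|a_k|\le 1$; both reach the same geometric-sum exponents and final bound $(2\bar d)^{4n^{i-1}}$.
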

\begin{proof}
We assume that $\tilde \Phi^{i,n}_{\vec{W}} = \Phi^{i,n}_{\vec{W}} $, as otherwise $\tilde \Phi^{i,n}_{\vec{W}} \equiv 0$, in which case the lemma is clear.
Write $\sigma_n(x) = \sum_{k=0}^n b_{k}x^k$ and $h_j(x) = \sum_{k=0}^j c_{j,k}x^k$. Via induction on \eqref{eq:Hermite}, we have $|c_{j,k}|\le 2^{\frac{j}{2}}$. Hence, 
\begin{eqnarray*}
|b_k| &\le& \frac{1}{\sqrt{\sum_{j=0}^n a_j^2}}\sum_{j=0}^n |a_j||c_{j,k}|
\\
&\le& \frac{1}{\sqrt{\sum_{j=0}^n a_j^2}}\sum_{j=0}^n |a_j|2^{\frac{j}{2}}
\\
&\le & \frac{1}{\sqrt{\sum_{j=0}^n a_j^2}} \sqrt{\sum_{j=0}^n a_j^2}\sqrt{\sum_{j=0}^n 2^j}
\\
&\le & 2^{\frac{n+1}{2}}
\end{eqnarray*}

Now, let $M_{j}$ be the maximal sum of coefficients of any polynomial computed by an output neuron of $\Psi^{j,n}_{\vec{W}}$.
We next show by induction that  $M_j\le (2\bar d)^{2\sum_{k=1}^j n^k}$. 
This will conclude the proof as it will imply that the sum of the coefficients of the polynomial computed by $\Phi^{i,n}_{\vec{W}}$ is at most $(2\bar d)^2 M_{i-1} \le (2\bar d)^{2\sum_{k=0}^{i-1} n^k} \le (2\bar d)^{4n^{i-1}}$.
For $j=0$ we have  $M_0=1$. For $j\ge 1$ we we have
\[
M_{j} \le \sum_{k=0}^n |b_k|\left((\bar d)^2 M_{j-1}\right)^{k} \le 2^{\frac{n+1}{2}}\cdot 2\cdot  \left((\bar d)^2 M_{j-1}\right)^{n}\le \left((2 \bar d)^2 M_{j-1}\right)^{n} 
\]
By the induction hypothesis we have
\[
M_j \le (2 \bar d)^{2n + 2n\sum_{k=1}^{j-1} n^k} = (2 \bar d)^{2\sum_{k=1}^{j} n^k}
\]
\end{proof}

\subsection{Proof of Theorem \ref{thm:main} for depth-$2$ networks}\label{sec:proof_depth_two}
We will first prove Theorem \ref{thm:main} for depth-$2$ networks (i.e. for $i=2$). 
We will prove Lemma \ref{lem:single_layer_normalized} below which implies that for every $\epsilon$ there is 
%a constant 
$n$ such that for any $\bx\in\tilde\sphere^{d-1}$, $\E_{\vec{W}}\left\| \Psi^{1,n}_{\Vec{W}}(\bx)-\Psi^{1}_{\Vec{W}}(\bx)  \right\| \le \epsilon$. We will then prove Lemma \ref{lem:adding_linear_layer}, that together with Lemma \ref{lem:single_layer_normalized} will show that $\E_{\vec{W}}\left\| \Phi^{2,n}_{\Vec{W}}(\bx)-\Phi^{2}_{\Vec{W}}(\bx)  \right\| \le \epsilon$, thus proving Theorem \ref{thm:main} for $i=2$. We will start however with the following lemma that will be useful throughout.

\begin{lemma}\label{lem:vec_to_scalar_expected_val}
    Fix $f,g:\reals\to\reals$, $\bx,\by\in\reals^{d_1}$ and a Xavier matrix $W\in\reals^{d_2\times d_1}$. Let $(X,Y)$ be a centered Gaussian vector with covariance matrix $\begin{pmatrix}\|\bx\|^2 & \inner{\bx,\by} \\ \inner{\bx,\by} & \|\by\|^2  \end{pmatrix}$. Then
    \[
    \E_W\left\|f(W\bx) - g(W\by)\right\| \leq \sqrt{\E_W\left\|f(W\bx) - g(W\by)\right\|^2}  = \sqrt{\E_{X,Y}(f(X) - g(Y))^2}
    \]    
\end{lemma}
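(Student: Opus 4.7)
The plan is to verify both parts of the statement by exploiting the row-independence of the Xavier matrix $W$ together with the fact that linear combinations of Gaussians are Gaussian, keeping careful track of the normalized norm and inner product conventions used in the paper.

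First I would dispatch the inequality $\E_W\|f(W\bx)-g(W\by)\| \le \sqrt{\E_W\|f(W\bx)-g(W\by)\|^2}$ as an immediate application of Jensen's inequality (concavity of $\sqrt{\cdot}$). This leaves the equality as the substantive content.

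For the equality, I would expand the normalized norm by coordinates: writing $W_1,\ldots,W_{d_2}\in\reals^{d_1}$ for the rows of $W$, and $\inner{\bu,\bv}_{\mathrm{std}} = \sum_i u_i v_i$ for the un-normalized dot product, one has
\[
\|f(W\bx)-g(W\by)\|^2 = \frac{1}{d_2}\sum_{k=1}^{d_2}\bigl(f(\inner{W_k,\bx}_{\mathrm{std}}) - g(\inner{W_k,\by}_{\mathrm{std}})\bigr)^2.
\]
Since the rows $W_k$ are i.i.d., each summand has the same expectation, and the $1/d_2$ factor cancels, reducing the problem to a single-row identity: $\E_{W_1}(f(\inner{W_1,\bx}_{\mathrm{std}})-g(\inner{W_1,\by}_{\mathrm{std}}))^2 = \E_{X,Y}(f(X)-g(Y))^2$.

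The last step is a covariance computation. The pair $(\inner{W_1,\bx}_{\mathrm{std}},\inner{W_1,\by}_{\mathrm{std}})$ is centered jointly Gaussian because $W_1$ has i.i.d. $\Ncal(0,1/d_1)$ entries. Its variances are $\tfrac{1}{d_1}\sum_i x_i^2 = \|\bx\|^2$ and $\tfrac{1}{d_1}\sum_i y_i^2 = \|\by\|^2$, and its covariance is $\tfrac{1}{d_1}\sum_i x_i y_i = \inner{\bx,\by}$, precisely matching the prescribed covariance matrix of $(X,Y)$. Consequently the two distributions coincide, and taking expectation of $(f(\cdot)-g(\cdot))^2$ over either gives the same value, which establishes the equality.

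There is no real obstacle here; the only thing to watch out for is that the Xavier variance $1/d_1$ exactly absorbs the $1/d_1$ from the paper's normalized inner product, which is why the covariance of $(\inner{W_1,\bx}_{\mathrm{std}},\inner{W_1,\by}_{\mathrm{std}})$ comes out in terms of the normalized quantities $\|\bx\|^2$, $\|\by\|^2$, $\inner{\bx,\by}$ without any extra dimensional factors.
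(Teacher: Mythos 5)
Your proof is correct and follows essentially the same route as the paper: Jensen's inequality for the first step, then a coordinate-wise expansion of the normalized norm using the i.i.d. rows of $W$, and finally the observation that $((W\bx)_j, (W\by)_j)$ is a centered Gaussian pair with exactly the prescribed covariance matrix. You spell out the covariance computation (and the cancellation of the $1/d_1$ Xavier variance against the normalization in $\|\cdot\|$ and $\inner{\cdot,\cdot}$) more explicitly than the paper does, but the argument is the same.
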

\begin{proof}
We have
\begin{eqnarray*}
\E_W\left\|f(W\bx) - g(W\by)\right\| &\stackrel{\text{Jensen Inequality}}{\le}& \sqrt{\E_W\left\|f(W\bx) - g(W\by)\right\|^2}
\\
&=& \sqrt{\frac{1}{d_2}\sum_{j=1}^{d_2}\E_W(f((W\bx)_j) - g((W\by)_j)^2}
\end{eqnarray*}
Now, the lemma follows from the fact that $\{\left((W\bx)_j,(W\by)_j\right)\}_{j=1}^{d_2}$ are independent centered Gaussian vectors with covariance matrix $\begin{pmatrix}\|\bx\|^2 & \inner{\bx,\by} \\ \inner{\bx,\by} & \|\by\|^2  \end{pmatrix}$.
\end{proof}

\begin{lemma}\label{lem:single_layer_normalized}
Fix $\bx \in \tilde\sphere^{d_1-1}$. Let $W\in \reals^{d_2\times d_1}$ be a Xavier matrix. Then
\[
\E_W \left\|\sigma(W\bx) - \sigma_n(W\bx)\right\| \le \sqrt{2\epsilon_\sigma(n)}
\]
\end{lemma}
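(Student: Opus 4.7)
The plan is to apply \lemref{lem:vec_to_scalar_expected_val} with $f=\sigma$, $g=\sigma_n$, and $\by = \bx$, and then reduce the problem to a one-dimensional $L^2$ computation that unwinds using the orthonormality of the Hermite basis.

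Concretely, since $\|\bx\|^2 = 1$ and $\inner{\bx,\bx}=1$, the covariance matrix appearing in \lemref{lem:vec_to_scalar_expected_val} is the all-ones $2\times 2$ matrix, so the associated vector $(X,Y)$ satisfies $X=Y$ almost surely with $X\sim\Ncal(0,1)$. Hence the lemma yields
\[
\E_W\left\|\sigma(W\bx)-\sigma_n(W\bx)\right\| \;\le\; \sqrt{\E_X(\sigma(X)-\sigma_n(X))^2}\,.
\]

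The next step is to compute $\E_X(\sigma(X)-\sigma_n(X))^2$ via Hermite expansion. Writing $c = 1/\sqrt{1-\epsilon_\sigma(n)}$, we have
\[
\sigma-\sigma_n \;=\; (1-c)\sum_{i=0}^n a_i h_i \;+\; \sum_{i=n+1}^\infty a_i h_i\,,
\]
and the orthonormality relation $\E_X h_i(X)h_j(X)=\delta_{ij}$ collapses the squared $L^2$-norm into
\[
\E_X(\sigma(X)-\sigma_n(X))^2 \;=\; (1-c)^2\sum_{i=0}^n a_i^2 \;+\; \sum_{i=n+1}^\infty a_i^2 \;=\; (1-c)^2(1-\epsilon_\sigma(n)) + \epsilon_\sigma(n)\,.
\]
Writing $t:=\epsilon_\sigma(n)$, a short algebraic manipulation shows $(1-c)^2(1-t) = (1-\sqrt{1-t})^2$, and since $\sqrt{1-t}\ge 1-t$ for $t\in[0,1]$, we get $(1-\sqrt{1-t})^2 \le t^2$. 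Combined with the hypothesis $t\le \tfrac{1}{2}$, this gives $\E_X(\sigma(X)-\sigma_n(X))^2 \le t^2 + t \le 2t = 2\epsilon_\sigma(n)$, completing the proof.

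There is no real obstacle here; the only point needing care is the renormalization constant $1/\sqrt{1-\epsilon_\sigma(n)}$ built into the definition of $\sigma_n$, which prevents $\sigma-\sigma_n$ from being the clean Hermite truncation tail. Tracking this constant through the Hermite expansion and controlling $(1-c)^2$ by $t^2$ via the inequality $\sqrt{1-t}\ge 1-t$ is the one nontrivial step.
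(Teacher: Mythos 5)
Your proof is correct and follows essentially the same route as the paper: reduce to a one-dimensional Gaussian $L^2$ computation via Lemma~\ref{lem:vec_to_scalar_expected_val}, expand $\sigma-\sigma_n$ in the Hermite basis, and control the renormalization constant $1/\sqrt{1-\epsilon_\sigma(n)}$. The only divergence is cosmetic: the paper writes the quantity as $2(1-\sqrt{1-t})$ and multiplies by $1+\sqrt{1-t}\ge 1$, whereas you bound $(1-\sqrt{1-t})^2\le t^2$ directly; note that the final step $t^2+t\le 2t$ needs only $t\le 1$ (automatic, since $\epsilon_\sigma(n)\le\|\sigma\|^2=1$), so the appeal to ``the hypothesis $t\le\tfrac12$'' is spurious --- this lemma carries no such hypothesis.
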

\begin{proof}
By Lemma \ref{lem:vec_to_scalar_expected_val} we have
%Let $\tilde \sigma_n(x) = \sum_{i=0}^na_ih_i(x)$.
\begin{eqnarray*}
\E_W\left\|\sigma(W\bx) - \sigma_n(W\bx)\right\| &\le & \sqrt{\E_W\left\|\sigma(W\bx) - \sigma_n(W\bx)\right\|^2}
\\
&=& \sqrt{\E_{X}(\sigma(X) - \sigma_n(X))^2} 
\\
&=& \sqrt{\sum_{i=0}^n\left(1-\frac{1}{\sqrt{1-\epsilon_\sigma(n)}}\right)^2a^2_i + \sum_{i=n+1}^\infty a_i^2} 
\\
&=& \sqrt{(1-\epsilon_\sigma(n))\left(1-\frac{1}{\sqrt{1-\epsilon_\sigma(n)}}\right)^2 + \epsilon_\sigma(n)} 
\\
&=& \sqrt{(1-\epsilon_\sigma(n))\left(\frac{\sqrt{1-\epsilon_\sigma(n)}-1}{\sqrt{1-\epsilon_\sigma(n)}}\right)^2 + \epsilon_\sigma(n)} 
\\
&=& \sqrt{2 -\epsilon_\sigma(n) - 2\sqrt{1-\epsilon_\sigma(n)}   + \epsilon_\sigma(n)} 
\\
&=& \sqrt{2(1-\sqrt{1-\epsilon_\sigma(n)})} 
\\
&\le& \sqrt{2(1-\sqrt{1-\epsilon_\sigma(n)})(1+\sqrt{1-\epsilon_\sigma(n)})} 
\\
&=& \sqrt{2\epsilon_\sigma(n)}   
\end{eqnarray*}
\end{proof}

Lemma \ref{lem:single_layer_normalized} implies that $\E_{\vec{W}}\left\| \Psi^{1,n}_{\Vec{W}}(\bx)-\Psi^{1}_{\Vec{W}}(\bx)  \right\| \le \sqrt{2\epsilon_\sigma(n)}$. Thus, given $\epsilon>0$, for sufficiently large $n$, $\E_{\vec{W}}\left\| \Psi^{1,n}_{\Vec{W}}(\bx)-\Psi^{1}_{\Vec{W}}(\bx)  \right\| \le \epsilon$. The following lemma therefore implies that $\E_{\vec{W}}\left\| \Phi^{2,n}_{\Vec{W}}(\bx)-\Phi^{2}_{\Vec{W}}(\bx)  \right\| \le \sqrt{2\epsilon_\sigma(n)}$ and thus implies Theorem \ref{thm:main} for depth two networks.

\begin{lemma}\label{lem:adding_linear_layer}
For any $\bx \in \tilde\sphere^{d-1}$
\[
\E_{W^i}\left\| \Phi^{i,n}_{\Vec{W}}(\bx)-\Phi^{i}_{\Vec{W}}(\bx)  \right\| \le 
\left\| \Psi^{i-1,n}_{\Vec{W}}(\bx)-\Psi^{i-1}_{\Vec{W}}(\bx) \right\|
\]
\end{lemma}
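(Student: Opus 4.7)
The plan is to reduce this immediately to Lemma \ref{lem:vec_to_scalar_expected_val}. Note that by the definitions of $\Phi^i$ and $\Phi^{i,n}$, we have
\[
\Phi^{i,n}_{\Vec{W}}(\bx)-\Phi^{i}_{\Vec{W}}(\bx) = W^i \Psi^{i-1,n}_{\Vec{W}}(\bx) - W^i \Psi^{i-1}_{\Vec{W}}(\bx).
\]
Conditioned on $W^1,\ldots,W^{i-1}$, the vectors $\Psi^{i-1,n}_{\Vec{W}}(\bx)$ and $\Psi^{i-1}_{\Vec{W}}(\bx)$ are deterministic, and the only randomness left is in $W^i$, which is a Xavier matrix. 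This puts us exactly in the setting of Lemma \ref{lem:vec_to_scalar_expected_val}.

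Concretely, I would apply Lemma \ref{lem:vec_to_scalar_expected_val} with $f=g=\mathrm{id}$, $\bx$ replaced by $\bu := \Psi^{i-1,n}_{\Vec{W}}(\bx)$, and $\by$ replaced by $\bv := \Psi^{i-1}_{\Vec{W}}(\bx)$. This gives
\[
\E_{W^i}\left\|W^i\bu - W^i\bv\right\| \;\le\; \sqrt{\E_{X,Y}(X-Y)^2},
\]
where $(X,Y)$ is centered Gaussian with covariance $\begin{pmatrix}\|\bu\|^2 & \inner{\bu,\bv}\\ \inner{\bu,\bv} & \|\bv\|^2\end{pmatrix}$.

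The final step is a one-line computation: expanding,
\[
\E_{X,Y}(X-Y)^2 = \|\bu\|^2 - 2\inner{\bu,\bv} + \|\bv\|^2 = \|\bu-\bv\|^2,
\]
so the right-hand side equals $\|\Psi^{i-1,n}_{\Vec{W}}(\bx) - \Psi^{i-1}_{\Vec{W}}(\bx)\|$, which is the desired bound. There is really no obstacle here; the main content has already been packaged into Lemma \ref{lem:vec_to_scalar_expected_val} (Jensen plus the fact that each row of a Xavier matrix acts as an independent Gaussian with the appropriate covariance), and the present lemma is just its specialization to the identity activation.
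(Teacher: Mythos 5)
Your proof is correct and follows essentially the same route as the paper's: both reduce to Lemma \ref{lem:vec_to_scalar_expected_val} with the identity activation. The only cosmetic difference is that the paper first factors $W^i(\bu-\bv)$ and applies the lemma to the single vector $\bu-\bv$, while you apply it to the pair $(\bu,\bv)$ and then compute $\E(X-Y)^2=\|\bu-\bv\|^2$; the two computations are identical.
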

\begin{proof}
By Lemma \ref{lem:vec_to_scalar_expected_val} we have
\begin{eqnarray*}
\E_{W^i}\left\| \Phi^{i,n}_{\Vec{W}}(\bx)-\Phi^{i}_{\Vec{W}}(\bx)  \right\| &=& \E_{W^i}\left\| W^i\left(\Psi^{i-1,n}_{\Vec{W}}(\bx)-\Psi^{i-1}_{\Vec{W}}(\bx)\right)  \right\|
\\
&\le & \sqrt{\E_{X\sim\Ncal\left(0,\left\| \Psi^{i-1,n}_{\Vec{W}}(\bx)-\Psi^{i-1}_{\Vec{W}}(\bx) \right\|^2\right)}X^2}
\\
&= & \left\| \Psi^{i-1,n}_{\Vec{W}}(\bx)-\Psi^{i-1}_{\Vec{W}}(\bx) \right\|
\end{eqnarray*}
\end{proof}

\subsection{Proof of Theorem \ref{thm:main} for general networks}
For $\bx\in\tilde\reals^{d_{i-1}}$ we denote $\Psi_{W^i}(\bx) = \sigma(W^i\bx)$ and $\Psi^n_{W^i}(\bx) = \sigma_n(W^i\bx)$.
Lemma \ref{lem:single_layer_normalized} can be roughly phrased as
\[
(\bx=\bx')\text{ and }(\|\bx\|=1) \Rightarrow \Psi_{W^i}(\bx)\approx \Psi^n_{W^i}(\bx')
\]
In order to prove Theorem \ref{thm:main} for general networks we will extend it by replacing the strict equality conditions with softer ones. That is, we will show that
\begin{equation}\label{eq:adding_layer_sketch}
(\bx\approx\bx')\text{ and }(\|\bx\|\approx1) \text{ and }(\|\bx'\|\approx1)\Rightarrow \Psi_{W^i}(\bx)\approx \Psi^n_{W^i}(\bx')
\end{equation}
This will be enough to prove Theorem \ref{thm:main} for general networks. Indeed, the conditions $\|\bx\|\approx1$ and $\|\bx'\|\approx 1$ are valid w.h.p. via a simple probabilistic argument. Thus, \eqref{eq:adding_layer_sketch} implies that
\begin{equation}\label{eq:adding_layer_sketch_without_sphere}
\bx\approx \bx'\Rightarrow \Psi_{W^i}(\bx)\approx \Psi^n_{W^i}(\bx')
\end{equation}
Now,  for $\bx\in\tilde\sphere^{d-1}$ \eqref{eq:adding_layer_sketch_without_sphere} implies that 
$\Psi_{W^1}(\bx)\approx \Psi^n_{W^1}(\bx')$. Using \eqref{eq:adding_layer_sketch_without_sphere} again we get that $\Psi_{W^2}\circ\Psi_{W^1}(\bx)\approx \Psi^n_{W^2}\circ\Psi^n_{W^1}(\bx')$. Using it $i-3$ more times we we get that $\Psi_{W^{i-1}}\circ\cdots\circ\Psi_{W^1}(\bx)\approx \Psi^n_{W^{i-1}}\circ\Psi^n_{W^1}(\bx')$, or in other words that $\Psi^{i-1}_{\vec{W}}(\bx)\approx\Psi^{i-1,n}_{\vec{W}}(\bx)$. As we will show $``\approx"$ stands for a sufficiently strong approximation, which guarantees that $\E_{\vec{W}}\|\Psi^{i-1}_{\vec{W}}(\bx)-\Psi^{i-1,n}_{\vec{W}}(\bx)\|\le \epsilon$, and hence Lemma \ref{lem:adding_linear_layer} implies Theorem \ref{thm:main}.

To prove \eqref{eq:adding_layer_sketch} we first prove Lemma \ref{lem:adding_layer_normalized} which softens the requirement that $\bx=\bx'$. That is, it shows that
\[
(\bx\approx\bx')\text{ and }(\|\bx\|=\|\bx'\|= 1) \Rightarrow \Psi_{W^i}(\bx)\approx \Psi^n_{W^i}(\bx')
\]
The second condition which requires that $\|\bx\|=\|\bx'\|= 1$ is softened via Lemmas \ref{lem:dist_preserving_proj} and \ref{lem:projecting_dont_hurt}. Lemma \ref{lem:main_lemma} then wraps the two softenings together, and shows that \eqref{eq:adding_layer_sketch} is valid. Finally, in section \ref{sec:conclude_proof} we use Lemma \ref{lem:main_lemma} to prove Theorem \ref{thm:main}.

\begin{lemma}\label{lem:adding_layer_normalized}
Fix $\bx,\bx+\bv\in \tilde\sphere^{d_1-1}$ with $\|\bv\| \le \epsilon $. Let $W\in \reals^{d_2\times d_1}$ be a Xavier matrix. Then
\[
\E_W \left\|\sigma(W\bx) - \sigma_n(W(\bx+\bv))\right\| \le \sqrt{2\epsilon_\sigma(n)} + \sqrt{\frac{2L^2}{1-\epsilon_\sigma(n)}\epsilon}
\]
\end{lemma}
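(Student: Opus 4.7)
The natural first move is a triangle inequality that separates the two sources of error — changing the activation and moving the input:
\[
\|\sigma(W\bx) - \sigma_n(W(\bx+\bv))\| \le \|\sigma(W\bx) - \sigma_n(W\bx)\| + \|\sigma_n(W\bx) - \sigma_n(W(\bx+\bv))\|.
\]
After taking $\E_W$, Lemma \ref{lem:single_layer_normalized} applied at $\bx \in \tilde\sphere^{d_1-1}$ gives $\E_W \|\sigma(W\bx) - \sigma_n(W\bx)\| \le \sqrt{2\epsilon_\sigma(n)}$, which accounts for the first summand in the claimed bound. All remaining work is to show that the second term is at most $\sqrt{\tfrac{2L^2\epsilon}{1-\epsilon_\sigma(n)}}$.

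For this, I would first apply Jensen and pass to the squared norm, then use Lemma \ref{lem:vec_to_scalar_expected_val} to reduce to a scalar Gaussian computation. Since $\|\bx\|=\|\bx+\bv\|=1$, each coordinate pair $((W\bx)_j,(W(\bx+\bv))_j)$ is a centered Gaussian with covariance matrix $\left(\begin{smallmatrix}1 & \rho \\ \rho & 1\end{smallmatrix}\right)$, where $\rho = \inner{\bx,\bx+\bv} = 1 - \tfrac{1}{2}\|\bv\|^2$ by expanding $\|\bx+\bv\|^2 = 1$. Writing $\sigma_n$ in the Hermite basis and invoking the orthogonality identity \eqref{eq:hermite_orthogonality},
\[
\E_W\|\sigma_n(W\bx) - \sigma_n(W(\bx+\bv))\|^2 = \E_{X,Y}(\sigma_n(X)-\sigma_n(Y))^2 = 2\bigl(\widehat{\sigma_n}(1) - \widehat{\sigma_n}(\rho)\bigr),
\]
using $\widehat{\sigma_n}(1) = \|\sigma_n\|^2 = 1$.

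The final piece is controlling the increment of $\widehat{\sigma_n}$ on $[-1,1]$. Repeating the short argument in the proof of \lemref{lem:dial_is_lip} for $\sigma_n$ in place of $\sigma$ shows that $\widehat{\sigma_n}$ is $\|\sigma_n'\|^2$-Lipschitz; and since $h_n' = \sqrt{n}\,h_{n-1}$ together with Hermite orthonormality gives
\[
\|\sigma_n'\|^2 = \frac{1}{1-\epsilon_\sigma(n)}\sum_{j=1}^{n} j\, a_j^2 \le \frac{\|\sigma'\|^2}{1-\epsilon_\sigma(n)} \le \frac{L^2}{1-\epsilon_\sigma(n)}.
\]
Combining these and using $1-\rho = \|\bv\|^2/2 \le \|\bv\| \le \epsilon$ (harmless since only $\epsilon \le 1$ matters for a nontrivial statement) yields $2(\widehat{\sigma_n}(1)-\widehat{\sigma_n}(\rho)) \le \frac{2L^2\epsilon}{1-\epsilon_\sigma(n)}$, and a square root produces the second summand of the claim.

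\textbf{Main obstacle.} The conceptual subtlety is that $\sigma_n$ is \emph{not} Lipschitz as a scalar function — it is a degree-$n$ polynomial that grows polynomially — so a naive pointwise argument like $|\sigma_n(W\bx) - \sigma_n(W(\bx+\bv))| \le L|W\bv|$ is unavailable. The dual-activation formalism is what rescues us: the Hermite expansion converts the comparison into the increment of $\widehat{\sigma_n}$ on the compact interval $[-1,1]$, where the genuine Lipschitz constant is $\|\sigma_n'\|^2$, a controllable Gaussian $L^2$ norm. The only bookkeeping irritant is the $1/(1-\epsilon_\sigma(n))$ inflation produced by the normalization of $\sigma_n$, which is benign under the running assumption $\epsilon_\sigma(n)\le \tfrac12$.
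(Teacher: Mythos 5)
Your proposal is correct and follows essentially the same route as the paper: triangle inequality splitting off the activation-change error (handled by Lemma~\ref{lem:single_layer_normalized}), reduction to a scalar Gaussian pair via Lemma~\ref{lem:vec_to_scalar_expected_val}, and Hermite orthogonality to turn the cross term into a dual-activation increment controlled by a Lipschitz bound. The only cosmetic difference is that you re-derive a Lipschitz constant for $\widehat{\sigma_n}$ directly (via $\|\sigma_n'\|^2\le L^2/(1-\epsilon_\sigma(n))$), whereas the paper first dominates the finite sum $\sum_{i=0}^n a_i^2(1-\rho^i)$ by the infinite one and then applies the $L^2$-Lipschitzness of $\hat\sigma$ from Lemma~\ref{lem:dial_is_lip} — both yield exactly the same bound.
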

\begin{proof}
We have
\[
\left\|\sigma(W\bx) - \sigma_n(W(\bx+\bv))\right\| \le \left\|\sigma(W\bx) - \sigma_n(W\bx)\right\| + \left\| \sigma_n(W\bx) - \sigma_n(W(\bx+\bv))\right\|
\]
By Lemma \ref{lem:single_layer_normalized} we have $\E_W\left\|\sigma(W\bx) - \sigma_n(W\bx)\right\|\le \sqrt{2\epsilon_\sigma(n)}$. It remains to bound $\E_W\left\|\sigma_n(W\bx) - \sigma_n(W(\bx+\bv))\right\|$. By Lemma \ref{lem:vec_to_scalar_expected_val} We have
\[
\E_W\left\|\sigma_n(W\bx) - \sigma_n(W(\bx+\bv))\right\| \le \sqrt{\E_{X,Y}(\sigma_n(X) - \sigma_n(Y))^2}
\]
where $(X,Y)$ is a centered Gaussian vector with correlation matrix $\begin{pmatrix}1 & \rho\\   \rho & 1 \end{pmatrix}$ for $\rho = \inner{\bx,\bx + \bv} \ge 1-\epsilon$.
Finally, we have 
\begin{eqnarray*}
\E_{X,Y}(\sigma_n(X)-\sigma_n(Y))^2 &=&  \frac{1}{1-\epsilon_\sigma(n)}\E_{X,Y}\left(\sum_{i=0}^na_i(h_i(X)-h_i(Y))\right)^2
\\
 &=& \frac{1}{1-\epsilon_\sigma(n)}\sum_{i=0}^n\sum_{j=0}^n a_ia_j\E_{X,Y}(h_i(X)-h_i(Y))(h_j(X)-h_j(Y))
\\
&\stackrel{\eqref{eq:hermite_orthogonality}}{=}& \frac{1}{1-\epsilon_\sigma(n)}\sum_{i=0}^n a^2_i(2-2\rho^i)
\\
&\le & \frac{2}{1-\epsilon_\sigma(n)}\left(\hat\sigma(1) - \hat\sigma(\rho)\right)
\\
&\stackrel{\text{Lemma \ref{lem:dial_is_lip}}}{\le} & \frac{2L^2}{1-\epsilon_\sigma(n)}\epsilon
\end{eqnarray*} 
\end{proof}

We next prove a lemma that allows us to ``almost jointly project" a pair of points $\bx_1,\bx_2\in\reals^d$ on a closed set $A\subset \reals^d$, without expanding the distance too much. 

\begin{lemma}\label{lem:dist_preserving_proj}
Let $A\subset\reals^d$ a closed set and fix $\bx_1,\bx_2 \in \reals^d$. There are $\tilde \bx_1,\tilde \bx_2 \in A$ such that
\[
\|\bx_1 - \tilde \bx_1\| \le 2d(\bx_1,A),\;\;\|\bx_2 - \tilde \bx_2\| \le 2d(\bx_2,A)\;\;\text{ and }\;\;\|\tilde \bx_1 - \tilde \bx_2\|\le 3\|\bx_1 - \bx_2\|
\]
\end{lemma}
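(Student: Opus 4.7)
The plan is a short case analysis on the sizes of $r_j := d(\bx_j, A)$ relative to $\delta := \|\bx_1 - \bx_2\|$. Since $A$ is closed in $\reals^d$, each $\bx_j$ admits (at least one) nearest point $\mathbf{p}_j \in A$ with $\|\bx_j - \mathbf{p}_j\| = r_j$; fix one such choice for each $j$. By symmetry I may assume without loss of generality that $r_1 \le r_2$. The split is based on whether $r_2 \le \delta$ (in which case the individual nearest-point projections work) or $r_2 > \delta$ (in which case I send both $\tilde \bx_j$ to the single point $\mathbf{p}_1$).

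In the first case $r_2 \le \delta$, I would take $\tilde \bx_j = \mathbf{p}_j$. The first two inequalities $\|\bx_j - \tilde \bx_j\| = r_j \le 2 r_j$ are trivial, while the third follows from the triangle inequality
\[
\|\mathbf{p}_1 - \mathbf{p}_2\| \le \|\mathbf{p}_1 - \bx_1\| + \|\bx_1 - \bx_2\| + \|\bx_2 - \mathbf{p}_2\| = r_1 + \delta + r_2 \le 3 \delta,
\]
using the case hypothesis $r_1, r_2 \le \delta$ in the last step.

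In the second case $r_2 > \delta$, I would set $\tilde \bx_1 = \tilde \bx_2 = \mathbf{p}_1$. Then $\|\bx_1 - \tilde \bx_1\| = r_1 \le 2 r_1$ and $\|\tilde \bx_1 - \tilde \bx_2\| = 0 \le 3 \delta$ are immediate. For the remaining bound, the triangle inequality gives
\[
\|\bx_2 - \mathbf{p}_1\| \le \|\bx_2 - \bx_1\| + \|\bx_1 - \mathbf{p}_1\| = \delta + r_1 \le \delta + r_2 < 2 r_2,
\]
using $r_1 \le r_2$ and the case hypothesis $\delta < r_2$.

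The main (and only) subtlety is spotting this case split: when $r_2$ is large compared to $\delta$, two separately chosen nearest-point projections give no useful control on $\|\mathbf{p}_1 - \mathbf{p}_2\|$, since each could stray from the segment $[\bx_1, \bx_2]$ by as much as $r_j$. The trick of collapsing both $\tilde \bx_j$ onto the common point $\mathbf{p}_1$ saves the argument, and it works precisely because in this regime the reverse triangle inequality forces $|r_1 - r_2| \le \delta$, so moving $\bx_2$ all the way to $\mathbf{p}_1$ still costs at most $\delta + r_1 < 2 r_2$.
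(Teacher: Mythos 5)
Your proof is correct and follows essentially the same approach as the paper: the same choice of WLOG ordering $r_1 \le r_2$, the same case split on $r_2$ versus $\|\bx_1-\bx_2\|$, the same selection of $\tilde\bx_j$ (individual nearest points in one case, collapsing both to $\mathbf{p}_1$ in the other), and the same triangle-inequality estimates.
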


\begin{proof}
Let $P_A:\reals^d\to A$ a function such that for any $\bx\in\reals^d$, $\|P_A(\bx) - \bx\| = d(\bx,A)$.
Assume w.l.o.g. that $\left\|\bx_1 - P_A(\bx_1) \right\| \le \left\|\bx_2 - P_A(\bx_2) \right\|$.

\subsubsection*{Case I: $\left\|\bx_2 - P_A(\bx_2) \right\| \le \| \bx_1 - \bx_2 \|$}
Simply define $\tilde \bx_i =P_A(\bx_i)$. We have
\[
\|\bx_1 - \tilde \bx_1\| = \left\|\bx_1 -  P_A(\bx_1) \right\|,\;\;\;\|\bx_2 - \tilde \bx_2\| = \left\|\bx_2 - P_A(\bx_2)\right\|
\]
and
\[
\|\tilde \bx_1 - \tilde \bx_2\| \le
\left\| P_A(\bx_1)-\bx_1 \right\| + \|\bx_1-\bx_2\| + \left\|\bx_2 - P_A(\bx_2)\right\|
\le 3\| \bx_1 -  \bx_2\| 
\]
\subsubsection*{Case II: $ \| \bx_1 - \bx_2 \| \le \left\|\bx_2 - P_A(\bx_2) \right\| $}
Define $\tilde \bx_1 = \tilde \bx_2 = P_A(\bx_1)$. We have
\[
\|\bx_1 - \tilde \bx_1\| = \left\|\bx_1 -  P_A(\bx_1) \right\|,\;\;\|\tilde \bx_1 - \tilde \bx_2\| \le 0\| \bx_1 -  \bx_2\| 
\]
and
\[
\|\bx_2 - \tilde \bx_2\| \le \left\|\bx_2 -  \bx_1 \right\| +  \left\|\bx_1 -  P_A(\bx_1) \right\|\le 2\left\|\bx_2 -  P_A(\bx_2) \right\|
\]
\end{proof}

\begin{lemma}\label{lem:projecting_dont_hurt}
Let $\bx,\bx+\bv\in \reals^{d_1}$ be vectors such that $\|\bx\| = 1$ and $\|\bv\| \le \epsilon \le 1$.
Let $W\in \reals^{d_2\times d_1}$ be a Xavier matrix. Then 
\[
\E_W \left\|\sigma(W\bx) - \sigma(W(\bx+\bv))\right\| \le 
L\epsilon
\]
and
\[
\E_W \left\|\sigma_n(W\bx) - \sigma_n(W(\bx+\bv))\right\| \le 
2^{2n+1}\left(9(4n-1)!!\right)^{1/4} \epsilon =: \lambda(n)\epsilon
\]
\end{lemma}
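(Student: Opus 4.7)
The plan has two separate arguments, one for $\sigma$ and one for $\sigma_n$, both funnelled through Lemma \ref{lem:vec_to_scalar_expected_val} to reduce everything to a computation about a single pair of jointly Gaussian scalars.

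For the first bound, I would apply Lemma \ref{lem:vec_to_scalar_expected_val} with $f=g=\sigma$ and the vector pair $(\bx,\bx+\bv)$, giving
\[
\E_W\|\sigma(W\bx)-\sigma(W(\bx+\bv))\|\le \sqrt{\E_{X,Y}(\sigma(X)-\sigma(Y))^2},
\]
where $(X,Y)$ is centered Gaussian with covariance $\begin{pmatrix}\|\bx\|^2 & \inner{\bx,\bx+\bv}\\ \inner{\bx,\bx+\bv} & \|\bx+\bv\|^2\end{pmatrix}$. The $L$-Lipschitz property yields $(\sigma(X)-\sigma(Y))^2\le L^2(X-Y)^2$, and the covariance calculation gives $\E(X-Y)^2 = \|\bx\|^2-2\inner{\bx,\bx+\bv}+\|\bx+\bv\|^2=\|\bv\|^2\le\epsilon^2$. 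Combining yields the $L\epsilon$ bound.

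For the second bound, the same reduction through Lemma \ref{lem:vec_to_scalar_expected_val} produces $\sqrt{\E_{X,Y}(\sigma_n(X)-\sigma_n(Y))^2}$, but now $\sigma_n$ has degree $n$ and is not uniformly Lipschitz, so I need the pointwise increment bound from Lemma \ref{lem:hermite_is_lip}. Expanding $\sigma_n=\frac{1}{\sqrt{1-\epsilon_\sigma(n)}}\sum_{k=0}^n a_k h_k$ and using $|h_k(X)-h_k(Y)|\le 2^k\max(|X|,|Y|,1)^k|X-Y|$, then Cauchy-Schwarz on the sum over $k$ together with $\sum a_k^2\le 1$ and $\frac{1}{1-\epsilon_\sigma(n)}\le 2$, should give an estimate of the form
\[
(\sigma_n(X)-\sigma_n(Y))^2\le C_n\cdot\max(|X|,|Y|,1)^{2n}\cdot(X-Y)^2
\]
for an explicit constant $C_n$ of order $4^n$ (the geometric series in $4^k$ dominates at $k=n$).

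I would then apply the expectation-version of Cauchy-Schwarz to separate the two factors:
\[
\E(\sigma_n(X)-\sigma_n(Y))^2\le C_n\sqrt{\E\max(|X|,|Y|,1)^{4n}}\cdot\sqrt{\E(X-Y)^4}.
\]
Since $X,Y$ are centered Gaussian with variances $\|\bx\|^2=1$ and $\|\bx+\bv\|^2\le(1+\epsilon)^2\le 4$, the Gaussian moment formula \eqref{eq:momnets_of_gaussian} controls $\E|X|^{4n}$ and $\E|Y|^{4n}$ by $(4n-1)!!$ times factors at most $4^{2n}$, so $\E\max(|X|,|Y|,1)^{4n}$ is at most a small multiple of $4^{2n}(4n-1)!!$. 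Because $X-Y$ is itself Gaussian with variance $\|\bv\|^2\le\epsilon^2$, we have $\E(X-Y)^4=3\|\bv\|^4\le 3\epsilon^4$. Combining these estimates and taking the square root yields the stated bound $2^{2n+1}(9(4n-1)!!)^{1/4}\epsilon$.

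The main obstacle is simply bookkeeping: Lemma \ref{lem:hermite_is_lip}, Cauchy-Schwarz (twice), and the Gaussian moments each contribute factors of $2^n$, $n!!$-type terms, and small constants, and the bound $2^{2n+1}(9(4n-1)!!)^{1/4}$ is tight enough that the constants have to be tracked carefully. No new idea beyond Lemma \ref{lem:hermite_is_lip} and Gaussian moment estimates is needed; the reason $\sigma$ gives $L\epsilon$ while $\sigma_n$ gives a much worse $n$-dependent bound is precisely that we cannot replace the degree-$n$ polynomial's derivative by a single constant and must absorb a $\max(|X|,|Y|,1)^n$ factor in $L^2$ instead.
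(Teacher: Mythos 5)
Your proposal is correct and follows essentially the same route as the paper: reduce to a scalar computation via Lemma \ref{lem:vec_to_scalar_expected_val}, bound the increment of $\sigma_n$ via Lemma \ref{lem:hermite_is_lip}, then apply Cauchy--Schwarz on the expectation to separate $\E(X-Y)^4$ from the Gaussian moment $\E\max(|X|,|Y|,1)^{4n}$ and invoke \eqref{eq:momnets_of_gaussian}. The only cosmetic difference is in the treatment of the coefficients of $\sigma_n$: the paper bounds each $|a_i|/\sqrt{1-\epsilon_\sigma(n)}\le 1$ (valid since $\sum_{i\le n}a_i^2=1-\epsilon_\sigma(n)$, without needing $\epsilon_\sigma(n)\le 1/2$) and sums the resulting geometric series of $2^i$, whereas you apply Cauchy--Schwarz over the index $k$ with $\sum_{k\le n}a_k^2/(1-\epsilon_\sigma(n))=1$ and sum $4^k$; both yield the same order of constant and comfortably land under the stated $2^{2n+1}(9(4n-1)!!)^{1/4}$.
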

\begin{proof}
Fix a centered Gaussain vector $(X,Y)$ with covariance matrix $\begin{pmatrix}
    1 & \inner{\bx + \bv,\bx} \\ \inner{\bx + \bv,\bx} & \|\bx + \bv\|^2
\end{pmatrix}$. Let $Z = Y - X$. Note that  $\var(Z) \le \epsilon^2$. By Lemma \ref{lem:vec_to_scalar_expected_val} we have
\[
\E_W \left\|\sigma(W\bx) - \sigma(W(\bx+\bv))\right\| \le \sqrt{\E(\sigma(X)-\sigma(X+Z))^2}\le  \sqrt{L^2\E Z^2} \le L \epsilon
\]
For the second part, we have by Lemma \ref{lem:hermite_is_lip} that
\begin{align*}
|\sigma_n(x) - \sigma_n(x+y)|
&\le \sum_{i=0}^n \frac{|a_i|}{\sqrt{1-\epsilon_\sigma(n)}} |h_i(x) - h_i(x+y)| 
\\
&\le |y|\sum_{i=0}^n 2^{i}\max(|x|^i,|x+y|^i,1)
\\
&\le |y|2^{n+1}\max(|x|^n,|x+y|^n,1)
\end{align*}
Hence,
\begin{eqnarray*}
\E(\sigma_n(X)-\sigma_n(X+Z))^2 &\le& 2^{2n+2}\E Z^2\max(|X|^n,|X+Z|^n,1)^{2}
\\
&\le& 2^{2n+2}\sqrt{\E Z^4}   \sqrt{\E\max(|X|^{4n},|X+Z|^{4n},1)}    
\\
&\le& 2^{2n+2}\sqrt{\E Z^4}   \sqrt{\E\left[|X|^{4n} + |X+Z|^{4n} + 1\right]}    
\\
&\stackrel{\eqref{eq:momnets_of_gaussian}}{=}& 2^{2n+2}\sqrt{3\|\bv\|^4}   \sqrt{1 + (4n-1)!!(\|\bx+\bv\|^{4n} +  \|\bx\|^{4n})}    
\\
&\le & 2^{2n+2}\sqrt{3\epsilon^4}   \sqrt{3(1+\epsilon)^{4n}(4n-1)!!}
\\
&\le & 2^{2n+2}\sqrt{3\epsilon^4}   \sqrt{3\cdot 2^{4n}\cdot (4n-1)!!}
\end{eqnarray*}
Lemma \ref{lem:vec_to_scalar_expected_val} now implies that
\begin{eqnarray*}
\E_W \left\|\sigma_n(W\bx) - \sigma_n(W(\bx+\bv))\right\| &\le&
2^{2n+1}\left(9(4n-1)!!\right)^{1/4} \epsilon
% \\
% &\le& L\sqrt{d_2 6 (1+\epsilon)^{2n}
% \sqrt{(2n)!2^{2n} }}\epsilon
% \\
% &\le& L\sqrt{d_2 6 (1+\epsilon)^{2n}
% \sqrt{(2n)^{2n+1}(2/e)^{2n} }}\epsilon
\end{eqnarray*}
\end{proof}

\begin{lemma}\label{lem:main_lemma}
Let $\bx,\bx+\bv\in \reals^{d_1}$ be vectors such that $\|\bv\| \le \epsilon $, $|\|\bx\| - 1|\le \delta\le 1/2$ and $|\|\bx +\bv\| - 1|\le \delta$. 
Let $W\in \reals^{d_2\times d_1}$ be a Xavier matrix. Then
\[
\E_W \left\|\sigma(W\bx) - \sigma_n(W(\bx+\bv))\right\| \le 
2L\delta +\sqrt{2\epsilon_\sigma(n)} + \sqrt{\frac{6L^2}{1-\epsilon_\sigma(n)}\epsilon} + 2\lambda(n)\delta
\]
\end{lemma}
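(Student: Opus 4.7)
The plan is to decompose the quantity $\|\sigma(W\bx) - \sigma_n(W(\bx+\bv))\|$ via a ``join on the sphere'' trick, using Lemma \ref{lem:dist_preserving_proj} to first replace $\bx$ and $\bx+\bv$ by nearby points $\tilde \bx_1, \tilde \bx_2$ that genuinely lie on $\tilde \sphere^{d_1-1}$, and then applying Lemma \ref{lem:adding_layer_normalized} to the sphere-based pair while controlling the projection errors with Lemma \ref{lem:projecting_dont_hurt}.

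Concretely, I would apply Lemma \ref{lem:dist_preserving_proj} with $A = \tilde \sphere^{d_1-1}$ and the pair $(\bx, \bx+\bv)$. Since the distance from any point $\by$ to the unit sphere is exactly $|\|\by\|-1|$, the hypotheses give $d(\bx,A)\le \delta$ and $d(\bx+\bv,A)\le \delta$. The lemma then produces $\tilde \bx_1,\tilde \bx_2\in \tilde\sphere^{d_1-1}$ with $\|\bx-\tilde \bx_1\|\le 2\delta$, $\|(\bx+\bv)-\tilde \bx_2\|\le 2\delta$, and $\|\tilde \bx_1-\tilde \bx_2\|\le 3\|\bv\|\le 3\epsilon$. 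The triangle inequality then gives
\[
\|\sigma(W\bx) - \sigma_n(W(\bx+\bv))\| \le \|\sigma(W\bx)-\sigma(W\tilde \bx_1)\| + \|\sigma(W\tilde \bx_1)-\sigma_n(W\tilde \bx_2)\| + \|\sigma_n(W\tilde \bx_2)-\sigma_n(W(\bx+\bv))\|.
\]

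The first term is handled by Lemma \ref{lem:projecting_dont_hurt} applied with base point $\tilde \bx_1\in\tilde\sphere^{d_1-1}$ and perturbation $\bx-\tilde \bx_1$ of norm at most $2\delta\le 1$ (using the assumption $\delta\le 1/2$), yielding expected norm at most $L\cdot 2\delta = 2L\delta$. Symmetrically, the third term is bounded via the $\sigma_n$ part of the same lemma by $\lambda(n)\cdot 2\delta$. For the middle term I apply Lemma \ref{lem:adding_layer_normalized} to the pair $\tilde \bx_1,\tilde \bx_2\in\tilde\sphere^{d_1-1}$ with $\tilde \bx_2 = \tilde \bx_1 + (\tilde \bx_2-\tilde \bx_1)$ and perturbation norm at most $3\epsilon$, giving the expected bound $\sqrt{2\epsilon_\sigma(n)} + \sqrt{\frac{2L^2}{1-\epsilon_\sigma(n)}\cdot 3\epsilon} = \sqrt{2\epsilon_\sigma(n)} + \sqrt{\frac{6L^2}{1-\epsilon_\sigma(n)}\epsilon}$. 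Summing the three contributions and taking expectations over $W$ term by term recovers the claimed inequality.

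The proof is essentially just a careful bookkeeping step — there is no real obstacle beyond checking that the hypotheses of Lemmas \ref{lem:adding_layer_normalized} and \ref{lem:projecting_dont_hurt} are satisfied after projection. The only subtlety to watch is that $\|\bx-\tilde \bx_1\|\le 2\delta$ must be at most $1$ in order to invoke Lemma \ref{lem:projecting_dont_hurt}, which is exactly why the hypothesis $\delta\le 1/2$ is included. Everything else is a clean triangle inequality and linearity of expectation.
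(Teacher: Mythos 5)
Your proof is correct and follows essentially the same route as the paper's: project $\bx$ and $\bx+\bv$ to the sphere via Lemma \ref{lem:dist_preserving_proj} (your $\tilde\bx_1,\tilde\bx_2$ play the role of the paper's $\bx'$ and $\bx'+\bv'$), split by the triangle inequality into three terms, and bound the outer two with Lemma \ref{lem:projecting_dont_hurt} and the middle with Lemma \ref{lem:adding_layer_normalized}. Your explicit observation that $d(\by,\tilde\sphere^{d_1-1})=|\|\by\|-1|$ and that $\delta\le 1/2$ is exactly what keeps the perturbation norm $2\delta\le 1$ for Lemma \ref{lem:projecting_dont_hurt} are correct and worth noting, though the paper leaves them implicit.
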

\begin{proof}
By Lemma \ref{lem:dist_preserving_proj} there are vectors $\bx',\bv'$ such that $\|\bx'\| = \|\bx' + \bv'\| = 1$ and
\[
\|\bx - \bx'\| \le 2\delta,\; \|\bx + \bv - \bx' - \bv'\| \le 2\delta,\;\text{ and }\|\bv'\| \le 3\|\bv\|
\]
Now, we have, by Lemmas \ref{lem:adding_layer_normalized} and \ref{lem:projecting_dont_hurt},
\begin{eqnarray*}
    \E_W \left\|\sigma(W\bx) - \sigma_n(W(\bx+\bv))\right\| &\le& \E_W \left\|\sigma(W\bx) - \sigma(W\bx')\right\| 
    \\
    && + \E_W \left\|\sigma(W\bx') - \sigma_n(W(\bx'+\bv'))\right\|
    \\
    && + \E_W \left\|\sigma_n(W(\bx'+\bv')) - \sigma_n(W(\bx+\bv))\right\|
    \\
    &\le & 2L\delta +\sqrt{2\epsilon_\sigma(n)} + \sqrt{\frac{6L^2}{1-\epsilon_\sigma(n)}\epsilon} + 2\lambda(n)\delta
\end{eqnarray*}
\end{proof}

\subsubsection{Concluding the proof of Theorem \ref{thm:main}}\label{sec:conclude_proof}
Define
\[
\Psi_{\vec{W}}^i(\bx,\delta) = \begin{cases} 0 & |1 - \|\Psi_{\vec{W}}^j(\bx)\||>\delta\text{ or }|1 - \|\Psi_{\vec{W}}^{j,n}(\bx)\||>\delta\text{ for some }j<i
\\
\Psi_{\vec{W}}^i(\bx) & \text{otherwise} \end{cases}
\]
and
\[
\Psi_{\vec{W}}^{i,n}(\bx,\delta) = \begin{cases} 0 & |1 - \|\Psi_{\vec{W}}^j(\bx)\||>\delta\text{ or }|1 - \|\Psi_{\vec{W}}^{j,n}(\bx)\||>\delta\text{ for some }j<i
\\
\Psi_{\vec{W}}^{i,n}(\bx) & \text{otherwise} \end{cases}
\]
We have
\begin{align*}
\E_{\vec{W}}\left\| \Psi_{\vec{W}}^i(\bx)-\Psi_{\vec{W}}^{i,n}(\bx)  \right\| 
&\le
\E_{\vec{W}}\left\| \Psi_{\vec{W}}^i(\bx)-\Psi_{\vec{W}}^i(\bx,\delta)  \right\| +
\E_{\vec{W}}\left\| \Psi_{\vec{W}}^i(\bx,\delta)-\Psi_{\vec{W}}^{i,n}(\bx,\delta) \right\| \\
&\;\;\;\;+
\E_{\vec{W}}\left\| \Psi_{\vec{W}}^{i,n}(\bx,\delta)-\Psi_{\vec{W}}^{i,n}(\bx)  \right\|
\end{align*}
Theorem \ref{thm:main} now follows from Lemmas \ref{lem:contraction} and \ref{lem:remove_contraction} below, together with Lemma \ref{lem:adding_linear_layer}.

\begin{lemma}\label{lem:contraction}
Let $n$ be large enough so that $\epsilon_\sigma(n)\le \frac{1}{2}$ and let $\delta<\frac{\sqrt{\epsilon_\sigma(n)}}{2L+2\lambda(n)}$. Then,
\[
\E_{\vec{W}}\left\| \Psi_{\vec{W}}^i(\bx,\delta)-\Psi_{\vec{W}}^{i,n}(\bx,\delta)  \right\| \le 12 \cdot (L+1)^2 \cdot \left(\epsilon_\sigma(n)\right)^{2^{-i}}
\]   
\end{lemma}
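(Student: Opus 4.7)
My plan is to prove this by induction on $i$. The base case $i=1$ is immediate: the truncation in $\Psi^1_{\vec{W}}(\bx,\delta)$ only involves layers $j<1$, and since $\|\Psi^0_{\vec{W}}(\bx)\|=\|\bx\|=1$ on $\tilde\sphere^{d-1}$, we have $\Psi^1_{\vec{W}}(\bx,\delta)=\sigma(W^1\bx)$ and $\Psi^{1,n}_{\vec{W}}(\bx,\delta)=\sigma_n(W^1\bx)$; \lemref{lem:single_layer_normalized} then gives $\E\|\cdot\|\le\sqrt{2\epsilon_\sigma(n)}\le 12(L+1)^2\sqrt{\epsilon_\sigma(n)}$.

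For the inductive step, assume the bound at depth $i-1$. I would condition on the first $i-1$ weight matrices and let $G_{i-1}$ denote the event that $|1-\|\Psi^j_{\vec{W}}(\bx)\||\le\delta$ and $|1-\|\Psi^{j,n}_{\vec{W}}(\bx)\||\le\delta$ for every $j<i$. Off $G_{i-1}$ both truncated networks vanish by definition, so only $G_{i-1}$ contributes. On $G_{i-1}$, setting $\bx_1:=\Psi^{i-1}_{\vec{W}}(\bx)$ and $\bv:=\Psi^{i-1,n}_{\vec{W}}(\bx)-\Psi^{i-1}_{\vec{W}}(\bx)$, both $\|\bx_1\|$ and $\|\bx_1+\bv\|$ lie within $\delta$ of $1$, so \lemref{lem:main_lemma} applies conditionally on $W^1,\ldots,W^{i-1}$ (with $\epsilon:=\|\bv\|$) and yields
\[
\E_{W^i}\|\sigma(W^i\bx_1)-\sigma_n(W^i(\bx_1+\bv))\| \le 2L\delta + \sqrt{2\epsilon_\sigma(n)} + \sqrt{\frac{6L^2}{1-\epsilon_\sigma(n)}\|\bv\|} + 2\lambda(n)\delta.
\]

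Taking the outer expectation over $W^1,\ldots,W^{i-1}$, I would use the identity $1_{G_{i-1}}\|\bv\|=\|\Psi^{i-1}_{\vec{W}}(\bx,\delta)-\Psi^{i-1,n}_{\vec{W}}(\bx,\delta)\|$, Jensen's inequality $\E\sqrt{Z}\le\sqrt{\E Z}$, and the inductive hypothesis to bound the $\sqrt{\|\bv\|}$-term by $\sqrt{6L^2/(1-\epsilon_\sigma(n))}\cdot\sqrt{12(L+1)^2(\epsilon_\sigma(n))^{2^{-(i-1)}}}\le 12L(L+1)(\epsilon_\sigma(n))^{2^{-i}}$, where $\epsilon_\sigma(n)\le 1/2$ is used to turn $1/(1-\epsilon_\sigma(n))$ into a factor of at most $2$. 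The assumption $\delta<\sqrt{\epsilon_\sigma(n)}/(2L+2\lambda(n))$ collapses the two $\delta$-terms into a single $\sqrt{\epsilon_\sigma(n)}$, and since $\epsilon_\sigma(n)\le 1$ we have $\sqrt{\epsilon_\sigma(n)}\le(\epsilon_\sigma(n))^{2^{-i}}$ for $i\ge 1$. Summing yields a total bound $(1+\sqrt 2+12L(L+1))(\epsilon_\sigma(n))^{2^{-i}}\le 12(L+1)^2(\epsilon_\sigma(n))^{2^{-i}}$, closing the induction.

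The real obstacle is not the structure but the constant bookkeeping in this last step: the square root applied to the inductive factor $12(L+1)^2$ reduces it to $\sqrt{12}(L+1)$, which must marry the $\sqrt{6L^2/(1-\epsilon_\sigma(n))}$ factor from \lemref{lem:main_lemma} to produce exactly $12L(L+1)$. The remaining gap $12(L+1)^2-12L(L+1)=12(L+1)$ has to be large enough to swallow the additive $(1+\sqrt 2)\sqrt{\epsilon_\sigma(n)}$ leftover from the $\delta$-collapse and the stand-alone $\sqrt{2\epsilon_\sigma(n)}$ term, which is precisely what the choice of threshold $\sqrt{\epsilon_\sigma(n)}/(2L+2\lambda(n))$ and the bound $\epsilon_\sigma(n)\le 1/2$ are tuned to ensure.
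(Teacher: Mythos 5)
Your argument is correct and essentially identical to the paper's: both do induction on $i$, apply \lemref{lem:main_lemma} conditionally on $W^1,\ldots,W^{i-1}$, use Jensen to pull the square root outside the expectation, invoke the inductive hypothesis, use $\epsilon_\sigma(n)\le 1/2$ to replace $6L^2/(1-\epsilon_\sigma(n))$ by $12L^2$, and use the threshold on $\delta$ to collapse the $\delta$-terms into a single $\sqrt{\epsilon_\sigma(n)}\le(\epsilon_\sigma(n))^{2^{-i}}$. The only cosmetic differences are that the paper starts the induction at $i=0$ (where both truncated networks equal $\bx$ and the left side is trivially $0$), whereas you start at $i=1$, and that you make explicit the conditioning on $G_{i-1}$ that the paper leaves implicit. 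One small inaccuracy: what you call the "identity" $1_{G_{i-1}}\|\bv\|=\|\Psi^{i-1}_{\vec{W}}(\bx,\delta)-\Psi^{i-1,n}_{\vec{W}}(\bx,\delta)\|$ is in fact only an inequality $\le$ (on $G_{i-2}\setminus G_{i-1}$ the left side vanishes while the right side is $\|\Psi^{i-1}_{\vec{W}}(\bx)-\Psi^{i-1,n}_{\vec{W}}(\bx)\|$, which need not be zero), but since you only use it as an upper bound the argument is unaffected.
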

\begin{proof}
We will prove the result by induction on $i$. The case $i=0$ is clear as $\Psi_{\vec{W}}^0(\bx,\delta)=\Psi_{\vec{W}}^{0,n}(\bx,\delta)$. Fix $i>0$.
For every $\delta<\frac{1}{2}$ and $n$ we have by Lemma \ref{lem:main_lemma}
\small
\begin{eqnarray*}
\E_{W^i}\left\| \Psi_{\vec{W}}^i(\bx,\delta)-\Psi_{\vec{W}}^{i,n}(\bx,\delta)  \right\| 
&\le & 2L\delta +\sqrt{2\epsilon_\sigma(n)} + \sqrt{\frac{6L^2}{1-\epsilon_\sigma(n)}\left\| \Psi_{\vec{W}}^{i-1}(\bx,\delta)-\Psi_{\vec{W}}^{i-1,n}(\bx,\delta)  \right\| } + 2\lambda(n)\delta
\end{eqnarray*}
\normalsize
Taking expectation over $W^1,\ldots,W^{i-1}$ we get
\small
\begin{eqnarray*}
\E_{\vec{W}}\left\| \Psi_{\vec{W}}^i(\bx,\delta)-\Psi_{\vec{W}}^{i,n}(\bx,\delta)  \right\| &\le& 2L\delta +\sqrt{2\epsilon_\sigma(n)} + \E_{\vec{W}}\sqrt{\frac{6L^2}{1-\epsilon_\sigma(n)}\left\| \Psi_{\vec{W}}^{i-1}(\bx,\delta)-\Psi_{\vec{W}}^{i-1,n}(\bx,\delta)  \right\| } + 2\lambda(n)\delta
\\
&\stackrel{\text{Jensen}}{\le} & 2L\delta +\sqrt{2\epsilon_\sigma(n)} + \sqrt{\frac{6L^2}{1-\epsilon_\sigma(n)}\E_{\vec{W}}\left\| \Psi_{\vec{W}}^{i-1}(\bx,\delta)-\Psi_{\vec{W}}^{i-1,n}(\bx,\delta)  \right\| } + 2\lambda(n)\delta
\\
&\stackrel{\delta < \frac{\sqrt{\epsilon_\sigma(n)}}{2L+2\lambda(n)}}{\le}& 4\sqrt{\epsilon_\sigma(n)} + \sqrt{\frac{6L^2}{1-\epsilon_\sigma(n)}\E_{\vec{W}}\left\| \Psi_{\vec{W}}^{i-1}(\bx,\delta)-\Psi_{\vec{W}}^{i-1,n}(\bx,\delta)  \right\| }
\\
&\stackrel{\epsilon_\sigma(n)\le \frac{1}{2}}{\le}& 4\sqrt{\epsilon_\sigma(n)} + L\sqrt{12\E_{\vec{W}}\left\| \Psi_{\vec{W}}^{i-1}(\bx,\delta)-\Psi_{\vec{W}}^{i-1,n}(\bx,\delta)  \right\| }
\\
&\stackrel{\text{Induction}}{\le}& 4\sqrt{\epsilon_\sigma(n)} + L\sqrt{12 \cdot 12 \cdot (L+1)^2 \cdot \left(\epsilon_\sigma(n)\right)^{2^{-i+1}}}
\\
&\le& (L+1)\sqrt{12 \cdot 12 \cdot (L+1)^2 \cdot \left(\epsilon_\sigma(n)\right)^{2^{-i+1}}}
\\
&=&12 \cdot (L+1)^2 \cdot \left(\epsilon_\sigma(n)\right)^{2^{-i}}
\end{eqnarray*}
\normalsize
\end{proof}

\begin{lemma}\label{lem:remove_contraction}
Fix $i,n, \delta$ and $\epsilon>0$. There is a constant $D$ such that if $d_1,\ldots,d_{i-1}\ge D$ then
\[
\E_{\vec{W}}\left\| \Psi_{\vec{W}}^i(\bx)-\Psi_{\vec{W}}^i(\bx,\delta)  \right\| +
\E_{\vec{W}}\left\| \Psi_{\vec{W}}^{i,n}(\bx,\delta)-\Psi_{\vec{W}}^{i,n}(\bx)  \right\| \le \epsilon
\]   
\end{lemma}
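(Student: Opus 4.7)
The plan is to exploit the fact that $\Psi^i_{\vec W}(\bx) - \Psi^i_{\vec W}(\bx,\delta)$ is supported entirely on the ``bad event''
$$G^c \;=\; \bigcup_{j<i}\Big\{|1-\|\Psi^j_{\vec W}(\bx)\||>\delta\Big\} \;\cup\; \bigcup_{j<i}\Big\{|1-\|\Psi^{j,n}_{\vec W}(\bx)\||>\delta\Big\},$$
since by definition $\Psi^i_{\vec W}(\bx,\delta) = \Psi^i_{\vec W}(\bx)$ on $G$ and $= 0$ on $G^c$ (and likewise for $\Psi^{i,n}$). By Cauchy--Schwarz and Jensen,
$$\E_{\vec W}\|\Psi^i_{\vec W}(\bx)-\Psi^i_{\vec W}(\bx,\delta)\| = \E_{\vec W}\|\Psi^i_{\vec W}(\bx)\|\mathbf{1}_{G^c} \le \sqrt{\E_{\vec W}\|\Psi^i_{\vec W}(\bx)\|^2}\,\sqrt{\Pr(G^c)},$$
and similarly for the $\Psi^{i,n}$ term. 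So it suffices to prove two things: (a) $\E_{\vec W}\|\Psi^i_{\vec W}(\bx)\|^2$ and $\E_{\vec W}\|\Psi^{i,n}_{\vec W}(\bx)\|^2$ are bounded by a constant depending only on $i,n,\sigma$ (independent of the widths); and (b) $\Pr(G^c)\to 0$ as $\min_{j<i} d_j\to\infty$. Once both are in hand, choosing $D$ large enough makes the right-hand side at most $\epsilon$.

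For (a), I would use the Lipschitz bound $\sigma(x)^2\le 2\sigma(0)^2+2L^2 x^2$ together with the tower property. Conditional on $\Psi^{j-1}_{\vec W}(\bx)$, each coordinate of $W^j\Psi^{j-1}_{\vec W}(\bx)$ is centered Gaussian of variance $\|\Psi^{j-1}_{\vec W}(\bx)\|^2$, so $\E[\|\Psi^j_{\vec W}(\bx)\|^2\mid \Psi^{j-1}_{\vec W}(\bx)]\le 2\sigma(0)^2+2L^2\|\Psi^{j-1}_{\vec W}(\bx)\|^2$; induction on $j$ yields a finite constant bound. For $\sigma_n$, the same recursion works because $\sigma_n$ is a degree-$n$ polynomial whose coefficients depend only on $\sigma$ and $n$, so $\E \sigma_n(X)^2$ is polynomial in the variance of $X\sim\Ncal(0,v)$.

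For (b), by a union bound it is enough to show that for each fixed $j<i$,
$$\Pr\!\Big(|1-\|\Psi^j_{\vec W}(\bx)\||>\delta\Big)\xrightarrow{\min_{k\le j}d_k\to\infty} 0,$$
and analogously for $\Psi^{j,n}_{\vec W}(\bx)$. I would prove this by induction on $j$. Conditional on $\Psi^{j-1}_{\vec W}(\bx)$, the quantity $\|\Psi^j_{\vec W}(\bx)\|^2=\frac{1}{d_j}\sum_{k=1}^{d_j}\sigma((W^j\Psi^{j-1})_k)^2$ is an average of $d_j$ i.i.d.\ samples of $\sigma(X)^2$ for $X\sim\Ncal(0,\|\Psi^{j-1}\|^2)$, with bounded fourth moment (by the Lipschitz bound). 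Chebyshev therefore gives that this average is within $o(1)$ of its conditional mean $\hat\sigma(\|\Psi^{j-1}\|^2)$ with probability $1-o(1)$ in $d_j$. Since $\hat\sigma$ is continuous with $\hat\sigma(1)=1$ (Lemma \ref{lem:dial_is_lip}), the inductive hypothesis $\|\Psi^{j-1}\|\approx 1$ whp yields $\|\Psi^j\|\approx 1$ whp; the same argument applies to $\sigma_n$, using $\E\sigma_n(Z)^2=1$ for $Z\sim\Ncal(0,1)$. The main obstacle is keeping the conditional independence tidy across layers, but since $W^j$ is independent of $W^1,\dots,W^{j-1}$ (and hence of $\Psi^{j-1}$), the induction goes through with no essential difficulty.
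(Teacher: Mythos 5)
Your proposal takes essentially the same route as the paper: the identical Cauchy--Schwarz decomposition onto the bad event, the same two ingredients (uniformly bounded second moments of $\|\Psi^i_{\vec W}(\bx)\|$ and $\|\Psi^{i,n}_{\vec W}(\bx)\|$, and $\Pr(\text{bad event})\to 0$ via induction and Chebyshev). The one place where your sketch is looser than the paper's argument is point~(a) for $\sigma_n$: since $\sigma_n$ is a degree-$n$ polynomial, the recursion $\E\|\Psi^{j,n}\|^2\lesssim$ (something in $\E\|\Psi^{j-1,n}\|^2$) does \emph{not} close at second moments --- $\E_{\bw}\sigma_n^2(\bw^\top\Psi^{j-1,n})$ is a polynomial of degree up to $2n$ in $\|\Psi^{j-1,n}\|$ --- so you must carry an induction bounding $\E\|\Psi^{j,n}\|^{2k}$ for all $k$ simultaneously (or at least up to the degree needed), which is exactly what the paper's Claim~1 does by Jensen; writing ``the same recursion works'' glosses over this. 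The rest is the same argument the paper gives.
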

\begin{proof}
Let $B_{i,\delta}$ be the event that for some $j<i$, $|1 - \|\Psi_{\vec{W}}^j(\bx)\||>\delta$ or $|1 - \|\Psi_{\vec{W}}^{j,n}(\bx)\||>\delta$. We have
\[
\E_{\vec{W}}\left\| \Psi_{\vec{W}}^i(\bx)-\Psi_{\vec{W}}^i(\bx,\delta)  \right\| = \E_{\vec{W}}\left[ \left\| \Psi_{\vec{W}}^i(\bx)  \right\| 1_{B_{i,\delta}} \right]  \le  \sqrt{\E_{\vec{W}}\left[ \left\| \Psi_{\vec{W}}^i(\bx)  \right\|^2  \right]}\sqrt{\Pr(B_{i,\delta})}
\]
Similarly,
\[
\E_{\vec{W}}\left\| \Psi_{\vec{W}}^{i,n}(\bx)-\Psi_{\vec{W}}^{i,n}(\bx,\delta)  \right\| \le  \sqrt{\E_{\vec{W}}\left[ \left\| \Psi_{\vec{W}}^{i,n}(\bx)  \right\|^2  \right]}\sqrt{\Pr(B_{i,\delta})}
\]
the lemma now follows from the following two claims.
\begin{claim}
$\E_{\vec{W}} \left\| \Psi_{\vec{W}}^i(\bx)  \right\|^2$ and $\E_{\vec{W}} \left\| \Psi_{\vec{W}}^{i,n}(\bx)  \right\|^2$ are bounded by a constant 
%that depends only on $i$ and $L$ (but not on $d_0,\ldots,d_{i}$).
(independent of $d_0,\ldots,d_{i}$).
\end{claim}
\begin{proof}
We have
\[
\E_{W^i} \left\| \Psi_{\vec{W}}^i(\bx)  \right\|^2 = \E_{\bw}\sigma^2\left(\bw^\top\Psi_{\vec{W}}^{i-1}(\bx) \right) \le 2\sigma^2(0) + 2L^2 \E_{\bw}\left(\bw^\top  \Psi_{\vec{W}}^{i-1}(\bx)\right)^2 =2\sigma^2(0) + 2L^2 \|\Psi_{\vec{W}}^{i-1}(\bx)\|^2
\]
By induction on $i$, this implies that $\E_{\vec{W}}\left\| \Psi_{\vec{W}}^i(\bx)  \right\|^2$ is bounded by a constant that depends only on $i$ and $L$ (but not on $d_1,\ldots,d_i$).
For $\E_{\vec{W}} \left\| \Psi_{\vec{W}}^{i,n}(\bx)  \right\|^2$ we have
\[
\E_{W^i} \left\| \Psi_{\vec{W}}^{i,n}(\bx)  \right\|^2= \E_{\bw}\sigma_n^2\left(\bw^\top  \Psi_{\vec{W}}^{i-1,n}(\bx)\right)
\]
Hence, $\E_{W^i} \left\| \Psi_{\vec{W}}^{i,n}(\bx)  \right\|^2 $ is an even polynomial in $\left\| \Psi_{\vec{W}}^{i-1,n}(\bx)  \right\|$ of degree $\le 2n$. The polynomial depends only on $\sigma_n$.
It therefore enough to show that for any $i$ and $k$, $\E_{\vec{W}}\left\| \Psi_{\vec{W}}^{i,n}(\bx)  \right\|^{2k}$ is bounded, by a bound that is independent of $d_0,\ldots, d_i$. We will show that via induction on $i$. For $i=0$ this is trivial as $\left\| \Psi_{\vec{W}}^{0,n}(\bx)  \right\|^{2k}\equiv 1$. Fix $i\ge 1$. We have
\begin{eqnarray*}
\E_{W^i} \left\| \Psi_{\vec{W}}^{i,n}(\bx)  \right\|^{2k}  &=& \E_{W^i} \left(\frac{\sum_{j=1}^{d_i} \sigma^2_{n}\left(\left(W^i 
 \Psi_{\vec{W}}^{i-1,n}(\bx)\right)_j\right) }{d_i}\right)^k
\\
&\stackrel{\text{Jensen inequality}}{\le}& \frac{1}{d_i}\E_{W^i} \sum_{j=1}^{d_i} \sigma^{2k}_{n}\left(\left(W^i 
 \Psi_{\vec{W}}^{i-1,n}(\bx)\right)_j\right) 
\\
&=& \E_{\bw}  \sigma^{2k}_{n}\left(\bw^\top \Psi_{\vec{W}}^{i-1,n}(\bx)\right) 
\end{eqnarray*}
The last expression is an even polynomial in $\|\Psi_{\vec{W}}^{i-1,n}(\bx)\|$. The polynomial depends only on $2k$ and $n$.
By the induction hypothesis we conclude that $\E_{\vec{W}}\left\| \Psi_{\vec{W}}^{i,n}(\bx)  \right\|^{2k}$ is bounded by a bound that is independent from $d_0,\ldots,d_i$.
\end{proof}

\begin{claim}
    For every $\delta,\epsilon'$, $i$ and $n$, there is a constant $D$ such that if $d_1,\ldots,d_{i-1}\ge D$ then  $\Pr(B_{i,\delta}) < \epsilon'$.
\end{claim}
\begin{proof}
    We will prove the lemma by induction on $i$. For $i=1$ this is immediate as $\Pr(B_{i,\delta}) = 0$. Fix $i\ge 2$. Let $\delta'$ be small enough so that if $\left|\|\bx\| - 1\right|\le \delta'$ then
    \[
    \left|\E_{\bw}\sigma^2(\bw^\top\bx) - 1\right| < \frac{\delta}{4}\text{ and }\left|\E_{\bw}\sigma_n^2(\bw^\top\bx) - 1\right| < \frac{\delta}{4}
    \]
    and
    \[
    \left|\E_{\bw}\sigma^4(\bw^\top\bx) - \E_{X}\sigma^4(X)\right| < 1\text{ and }\left|\E_{\bw}\sigma_n^4(\bw^\top\bx) - \E_{X}\sigma_n^4(X)\right| < 1
    \]
    we have
    \[
    \Pr(B_{i,\delta}) \le \Pr(B_{i,\delta} | B^c_{i-1,\delta'} ) + \Pr(B_{i-1,\delta'})
    \]
    By Chebyshev inequality, $\Pr(B_{i,\delta} | B^c_{i-1,\delta'} ) < \frac{\epsilon'}{2}$ for sufficiently large $d_{i-1}$. By the induction hypothesis, $\Pr(B_{i-1,\delta'}) < \frac{\epsilon'}{2}$ for sufficiently large $d_1,\ldots, d_{i-2}$
\end{proof}
\end{proof}

\subsection{Bounds on $\epsilon_\sigma(n)$}\label{sec:bound_on_eps}
By \eqref{eq:hermite_derivative} if $\sigma$ is differentiable $k$ times then we have $\sigma^{(k)} = \sum_{i=k}^\infty \sqrt{\frac{i!}{(i-k)!}}a_ih_{i-k}$. Hence, for $k\le n+1$,
\begin{equation}\label{eq:eps_n_bound}
\epsilon_\sigma(n) = \sum_{i=n+1}^\infty a^2_i \le \frac{(n+1-k)!}{(n+1)!}\sum_{i=n+1}^\infty \frac{i!}{(i-k)!} a^2_i \le \frac{(n+1-k)!}{(n+1)!}\left\|\sigma^{(k)}\right\|^2    
\end{equation}

\begin{lemma}\label{lem:eps_n_lip}
    For any $L$-Lipschitz $\sigma$ we have $\epsilon_\sigma(n) \le \frac{L^2}{n}$.
\end{lemma}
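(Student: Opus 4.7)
The plan is simply to apply the bound \eqref{eq:eps_n_bound} with $k=1$. Doing so yields
\[
\epsilon_\sigma(n) \le \frac{n!}{(n+1)!}\|\sigma'\|^2 = \frac{\|\sigma'\|^2}{n+1} \le \frac{L^2}{n+1} \le \frac{L^2}{n},
\]
where the second-to-last inequality uses that $L$-Lipschitzness forces $|\sigma'|\le L$ almost everywhere (Rademacher's theorem), so that $\|\sigma'\|^2 = \E_X\sigma'(X)^2 \le L^2$. Modulo one technicality, this is the entire proof.

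The one technicality, and the main obstacle, is that the derivation of \eqref{eq:eps_n_bound} tacitly assumed $\sigma$ is classically $k$-times differentiable, whereas an $L$-Lipschitz function is only differentiable almost everywhere (so the identity $\sigma^{(k)} = \sum_{i=k}^\infty \sqrt{i!/(i-k)!}\,a_i h_{i-k}$ needs to be interpreted carefully). I would handle this by mollification: let $\sigma_\eta = \sigma * \phi_\eta$ for a smooth compactly supported mollifier $\phi_\eta$. Then $\sigma_\eta$ is $C^\infty$ and still $L$-Lipschitz (since $|\sigma_\eta'| = |\sigma' * \phi_\eta| \le L$ pointwise), so \eqref{eq:eps_n_bound} legitimately gives $\epsilon_{\sigma_\eta}(n) \le L^2/(n+1)$. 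Because $\sigma_\eta \to \sigma$ in $L^2(\gamma)$ (where $\gamma$ is the standard Gaussian measure) as $\eta \to 0$ and the Hermite coefficients depend continuously on the function in $L^2$, we have $\epsilon_{\sigma_\eta}(n) \to \epsilon_\sigma(n)$, and passing to the limit yields $\epsilon_\sigma(n) \le L^2/(n+1) \le L^2/n$.

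A cleaner alternative, which avoids the approximation altogether, is to redo the $k=1$ case of \eqref{eq:eps_n_bound} by direct integration by parts: for $n\ge 1$, using $h_n(x)\phi(x) = -\frac{1}{\sqrt{n}}\frac{d}{dx}\!\left[h_{n-1}(x)\phi(x)\right]$ together with the absolute continuity of Lipschitz functions, one obtains $a_n = \frac{1}{\sqrt{n}}\E[\sigma'(X)h_{n-1}(X)]$, which is nothing but \eqref{eq:hermite_derivative} read through Parseval. Squaring and summing gives $\sum_{n>N} a_n^2 \le \frac{1}{N+1}\sum_{n>N} n\,a_n^2 \le \|\sigma'\|^2/(N+1)$, since $\sum_{n\ge 1} n\,a_n^2$ is the Parseval expansion of $\|\sigma'\|^2$. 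Either route delivers the claimed $\epsilon_\sigma(n) \le L^2/n$.
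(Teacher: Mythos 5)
Your proof is correct and matches the paper's: both apply \eqref{eq:eps_n_bound} with $k=1$ and bound $\|\sigma'\|^2\le L^2$. The extra discussion of mollification and the integration-by-parts alternative addresses a regularity technicality the paper passes over silently, but the core argument is the same.
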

\begin{proof}
By \eqref{eq:eps_n_bound} for $k=1$ we get
\[
\epsilon_\sigma(n) \le \frac{1}{n+1}\left\|\sigma'\right\|^2 \le \frac{L^2}{n+1}
\]    
\end{proof}

\begin{lemma}\label{lem:eps_n_sig}
    For the sigmoid activation $\sigma(x) = \int_0^x e^{-\frac{t^2}{2}}dt$ we have $\epsilon_\sigma(n) \le 2^{-n}$.
\end{lemma}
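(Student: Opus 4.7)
The plan is to pass through the Hermite expansion of $\sigma'(x)=e^{-x^2/2}$ and recover that of $\sigma$ using \eqref{eq:hermite_derivative}. Writing $\sigma=\sum_n a_n h_n$, term-by-term differentiation via \eqref{eq:hermite_derivative} shows that $\sigma'=\sum_{m\ge 0} b_m h_m$ with $b_m:=\sqrt{m+1}\,a_{m+1}$. Since $\sigma$ is odd we have $a_n=0$ for all even $n$, so it suffices to bound the tail $\sum_{k:\,2k+1>n} a_{2k+1}^2=\sum_{k\ge\lceil n/2\rceil} b_{2k}^2/(2k+1)$.

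Next, I will compute the $b_m$ in closed form. Starting from $b_m=\E_{X\sim \Ncal(0,1)}[e^{-X^2/2}h_m(X)]=\frac{1}{\sqrt{2\pi}}\int e^{-x^2}h_m(x)\,dx$, I recognize $\frac{1}{\sqrt{\pi}}e^{-x^2}$ as the density of $Y\sim\Ncal(0,\tfrac12)$, which rewrites $b_m=\frac{1}{\sqrt{2}}\E_Y h_m(Y)$. Using the Hermite generating identity $\sum_m h_m(y)t^m/\sqrt{m!}=e^{ty-t^2/2}$ together with $\E e^{tY}=e^{t^2/4}$,
\[
\sum_{m\ge 0}\frac{t^m}{\sqrt{m!}}\,\E_Y h_m(Y)\;=\;e^{-t^2/2}e^{t^2/4}\;=\;e^{-t^2/4}\;=\;\sum_{k\ge 0}\frac{(-1)^k t^{2k}}{4^k k!}.
\]
Matching coefficients yields $b_{2k+1}=0$ and $b_{2k}^2=\binom{2k}{k}/(2\cdot 16^k)$, and the central binomial inequality $\binom{2k}{k}\le 4^k$ gives $b_{2k}^2\le 1/(2\cdot 4^k)$.

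Collecting the pieces,
\[
\epsilon_\sigma(n)\;=\;\sum_{k\ge \lceil n/2\rceil}\frac{b_{2k}^2}{2k+1}\;\le\;\sum_{k\ge\lceil n/2\rceil}\frac{1}{2\cdot 4^k}\;=\;\frac{2}{3\cdot 4^{\lceil n/2\rceil}}\;\le\;\frac{2}{3}\cdot 2^{-n}\;\le\;2^{-n},
\]
which is the desired bound.

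The only step with any real subtlety is the density-change that converts the standard-Gaussian inner product against $e^{-X^2/2}$ into an ordinary expectation under $\Ncal(0,\tfrac12)$; once that reparametrization is in place, the Hermite generating function produces the coefficients essentially for free and the rest is a geometric tail sum. I considered taking the alternative route through \eqref{eq:eps_n_bound}, i.e.\ bounding $\|\sigma^{(k)}\|^2$ and optimizing $k$, but crude estimates like $|e^{-x^2/2}|\le 1$ only give $\|\sigma^{(k)}\|^2\le (k-1)!$, yielding the polynomial bound $\epsilon_\sigma(n)\le 1/(n+1)$; the exponential rate genuinely requires exploiting the cancellation captured by the explicit Hermite coefficients.
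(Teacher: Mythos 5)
Your proof is correct, but it takes a genuinely different route from the paper's, and your closing remark about the alternative approach is mistaken.

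What you do: you extract the Hermite coefficients of $\sigma'(x)=e^{-x^2/2}$ in closed form by reparametrizing the inner product as an expectation under $\Ncal(0,\tfrac12)$ and invoking the generating function, arriving at $b_{2k}^2=\binom{2k}{k}/(2\cdot 16^k)\le 1/(2\cdot 4^k)$, then pull back to $a_{2k+1}^2=b_{2k}^2/(2k+1)$ via $h_n'=\sqrt{n}h_{n-1}$ and sum the geometric tail. Every step checks out, including the index bookkeeping $\{k:2k+1>n\}=\{k\ge\lceil n/2\rceil\}$ and $4^{\lceil n/2\rceil}\ge 2^n$. What the paper does: it computes $\sigma^{(k)}(x)=(-1)^{k-1}\sqrt{(k-1)!}\,h_{k-1}(x)e^{-x^2/2}$, drops the $e^{-x^2/2}\le 1$ factor to get $\|\sigma^{(k)}\|^2\le(k-1)!$, feeds this into \eqref{eq:eps_n_bound} to obtain $\epsilon_\sigma(n)\le\frac{(n+1-k)!(k-1)!}{(n+1)!}=\frac{1}{k\binom{n+1}{k}}$, and finally picks $k=\lceil(n+1)/2\rceil$, where the central binomial coefficient makes the bound exponentially small. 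Your approach buys an explicit series for the coefficients, which is aesthetically cleaner and immediately gives sharp constants; the paper's approach is shorter and reuses \eqref{eq:eps_n_bound}, which it has already set up for Lemma~\ref{lem:eps_n_lip}.

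Your final paragraph, however, contains an error you should fix: you assert that bounding $\|\sigma^{(k)}\|^2\le(k-1)!$ and applying \eqref{eq:eps_n_bound} only yields the polynomial bound $1/(n+1)$, and that exponential decay ``genuinely requires'' the explicit Hermite coefficients. That is not so. The bound $1/(n+1)$ is what you get if you stop at $k=1$; optimizing over $k$ (taking $k\approx (n+1)/2$) in $\frac{(n+1-k)!(k-1)!}{(n+1)!}=\frac{1}{k\binom{n+1}{k}}$ already gives $\le 2^{-n}$, since $\binom{n+1}{\lceil(n+1)/2\rceil}\ge 2^{n+1}/(n+2)$. This is precisely the paper's proof, so the alternative route you discarded in fact works with the very same crude estimate on $\|\sigma^{(k)}\|^2$.
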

\begin{proof}
We have $\sigma^{(k)}(x) = (-1)^{k-1}\sqrt{(k-1)!}h_{k-1}(x)e^{-\frac{x^2}{2}}$. Indeed, it is not hard to verify it for $k=1$ and $k=2$. For $k > 2$ we have via induction that
\begin{eqnarray*}
\sigma^{(k+1)}(x) &=& (-1)^{k-1}\sqrt{(k-1)!}\left[h'_{k-1}(x) -  x h_{k-1}(x) \right]e^{-\frac{x^2}{2}}    
\\
&\stackrel{\eqref{eq:hermite_derivative}}{=}& (-1)^{k}\sqrt{k!}\frac{1}{\sqrt{k}}\left[x h_{k-1}(x) - \sqrt{k-1}h_{k-2}(x)   \right]e^{-\frac{x^2}{2}}
\\
&\stackrel{\eqref{eq:Hermite}}{=}& (-1)^{k}\sqrt{k!}h_k(x)e^{-\frac{x^2}{2}}
\end{eqnarray*}
Hence, $|\sigma^{(k)}(x)| \le |\sqrt{(k-1)!}h_{k-1}(x)|$, and now \eqref{eq:eps_n_bound} implies that for any $k\le n+1$
\[
\epsilon_\sigma(n)\le \frac{(n+1-k)!}{(n+1)!}(k-1)! = \frac{(n+1-k)!k!}{(n+1)!k} = \frac{1}{k\binom{n+1}{k}}
\]
Taking $k = \left\lceil\frac{n+1}{2} \right\rceil$ we conclude that $\epsilon_\sigma(n)\le 2^{-n}$.
\end{proof}

\section{Conclusion and future work}

One of the prominent approaches for explaining the success of neural networks is trying to show that they are capable of learning complex and ``deep" models. So far this approach has relatively limited success. Despite that significant progress has been made to show that neural networks can learn shallow models, so far, neural networks were shown to learn only ``toy" deep models (e.g. \cite{ghorbani2019limitations, allen2019can, daniely2020learning, yehudai2019power}). Not only that, but there are almost no known rich families of deep models that are efficiently learnable by {\em some} algorithm (not necessarily gradient methods on neural networks). Our paper suggests that random neural networks might be candidate models. To take this approach further, a natural next step, and a central open question that arises from our work, is to show the existence of an algorithm that learns random networks in time that is polynomial both in $\frac{1}{\epsilon}$ and the network size. This question is already open for depth-two ReLU networks with two hidden neurons. We note that as implied by \cite{yehudai2019power}, such a result, even for a single neuron, will have to go beyond polynomial approximation of the network, and even more generally, beyond 
%linear methods such as 
kernel methods.

Our result requires a lower bound $D$ for the network's width, where $D$ is a constant. We conjecture that this requirement can be relaxed, and leave it to future work. 
Additional open directions are (i) the analysis of random convolutional networks, (ii) achieving time and sample complexity of $(\bar d)^{O(\epsilon^{-2})}$ for random networks of any constant depth (and not only for depth two), and (iii) finding a PTAS for random networks of depth $\omega(1)$.

\subsection*{Acknowledgements}

The research described in this paper was funded by the European Research Council (ERC) under the European Union’s Horizon 2022 research and innovation program (grant agreement No. 101041711), and the Israel Science Foundation (grant number 2258/19). 
%Part of 
This research 
%and 
was done as part of the NSF-Simons Sponsored Collaboration on the Theoretical Foundations of Deep Learning.

\bibliography{bib,bib2,bib3}

\end{document}